\newtheorem{assumption}{Assumption}
\newtheorem{proposition}{Proposition}
\newtheorem{proof}{Proof}
\newtheorem{lemma}{Lemma}
\newcommand{\new}{\textcolor{black}}
\newcommand{\NAME}{\textsc{FedGS}}
\newcommand{\SAMPLERNAME}{GBP-CS}
\begin{document}

\title{Data Heterogeneity-Robust Federated Learning via Group Client Selection in Industrial IoT}

\author{
\IEEEauthorblockN{
Zonghang~Li,
Yihong~He,
Hongfang~Yu,~\IEEEmembership{Member~IEEE},
Jiawen~Kang,
Xiaoping~Li, 
Zenglin~Xu,~\IEEEmembership{Senior~Member~IEEE},
Dusit~Niyato,~\IEEEmembership{Fellow~IEEE}}
}

\IEEEtitleabstractindextext{
\begin{abstract}
\new{Nowadays, the industrial Internet of Things (IIoT) has played an integral role in Industry 4.0 and produced massive amounts of data for industrial intelligence. These data locate on decentralized devices in modern factories. To protect the confidentiality of industrial data,  federated learning (FL) was introduced to collaboratively train shared machine learning models. However, the local data collected by different devices skew in class distribution and degrade industrial FL performance. This challenge has been widely studied at the mobile edge, but they ignored the rapidly changing streaming data and clustering nature of factory devices, and more seriously, they may threaten data security. In this paper, we propose \NAME, which is a hierarchical cloud-edge-end FL framework for 5G empowered industries, to improve industrial FL performance on non-i.i.d. data. Taking advantage of naturally clustered factory devices, \NAME~uses a gradient-based binary permutation algorithm (\SAMPLERNAME) to select a subset of devices within each factory and build homogeneous super nodes participating in FL training. Then, we propose a compound-step synchronization protocol to coordinate the training process within and among these super nodes, which shows great robustness against data heterogeneity. The proposed methods are time-efficient and can adapt to dynamic environments, without exposing confidential industrial data in risky manipulation. We prove that \NAME~has better convergence performance than FedAvg and give a relaxed condition under which \NAME~is more communication-efficient. Extensive experiments show that \NAME~improves accuracy by 3.5\% and reduces training rounds by 59\% on average, confirming its superior effectiveness and efficiency on non-i.i.d. data.}
\end{abstract}
\begin{IEEEkeywords}
AI, Federated Learning, Industrial IoT, Data Heterogeneity, Client Selection, Cluster Learning.
\end{IEEEkeywords}
}

\maketitle
\IEEEdisplaynontitleabstractindextext
\IEEEpeerreviewmaketitle

\footnotetext[1]{
This work was supported in part by National Key Research and Development Program of China (2019YFB1802800), PCL Future Greater-Bay Area Network Facilities for Large-Scale Experiments and Applications (LZC0019).}

\footnotetext[2]{
Zonghang Li, Yihong He and Hongfang Yu are with School of Information and Communication Engineering, University of Electronic Science and Technology of China. (Emails: lizhuestc, heyh.uestc, yuhfnetworklab@gmail.com). Xiaoping Li is with School of Mathematical Science, University of Electronic Science and Technology of China. (Email: lixiaoping.math@uestc.edu.cn). Zenglin Xu is with School of Computer Science and Technology, Harbin Institute of Technology (Shenzhen), China. (Email: xuzenglin@hit.edu.cn). Dusit Niyato and Jiawen Kang is with School of Computer Science and Engineering, Nanyang Technological University, Singapore. (Email: dniyato, kavinkang@ntu.edu.sg). Corresponding author: Hongfang Yu.}

\footnotetext[3]{
This work was done in part at Nanyang Technological University (NTU).}

\section{Introduction}
In recent years, the Internet of Things (IoT) has played an increasingly integral role in the industrial community. \new{Taking logistics sorting and automatic object identification as examples. Optical character recognition (OCR) cameras on logistics pipelines detect and read characters on packing boxes in order to sort them\cite{chen2018high}. At the same time, the surrounding surveillance cameras are constantly monitoring, automatically identifying objects through optical recognition of the characters on their badges, and confirming whether the machines, robots, vehicles, and workers in the factory are legal entrants\cite{liukkonen2016toward}.} These optical sensors collect a huge amount of industrial data. In order to tap the value of these data, advanced data mining technologies are needed, especially machine learning (ML). However, gathering the industrial big data to the cloud leads to unbearable transmission overhead, and also violates data privacy regulations. Taking the idea of task offloading, federated learning (FL)\cite{mcmahan2021advances} sinks model training from the cloud to the edge. \new{OCR cameras use local optical data to train local OCR models, then upload their local model updates to the cloud to update the global model. The global model is then synchronized to OCR cameras. These steps are repeated until the global model converges.} In this way, FL preserves data confidentiality because the raw data does not leave the devices.

The combination of industrial IoT (IIoT) and FL opens a door for smart industry\cite{hiessl2020industrial,lim2020federated,parimala2021fusion}. FL provides powerful privacy-preserving tools for mining decentralized industrial data, and IIoT technologies such as smart sensors and mobile robots provide rich resources (e.g., data, computation) for FL. \new{Despite these benefits, compared with OCR in natural scenes, FL in industries requires higher accuracy to ensure the reliability of industrial operations. However, sensors' local data distributions can be highly heterogeneous due to differences in times, locations, functions, and so on. Taking the logistics industry of cross-border e-commerce as an example, the Singapore warehouse transports more packing boxes to Singapore than other countries, so the optical characters in the word ``Singapore" appear more times than other characters. For this reason, the number of optical images of each character captured by different OCR cameras (in different warehouses) is skewed and inconsistent. Other examples are device failure detection\cite{zhang2020blockchain} and object detection\cite{luo2019real}, which also prove the existence of data heterogeneity in real-world IIoT.} These skewed distributed data are called non-independent and identically distributed (non-i.i.d.) and can lead to FL performance degradation\cite{zhao2018federated}, \new{which becomes more challenging when local data of sensors are constantly changing.} 

\new{The non-i.i.d. data challenge has inspired the research field of heterogeneous FL, especially the field of mobile edge computing (MEC)\cite{lim2020federated,yao2019federated,zhang2021client,zhao2021federated,yoshida2019hybridfl,duan2019astraea,wang2021non,wen2020unified,sattler2020clustered,wang2020optimizing,zeng2022heterogeneous}, which currently remains open. These kinds of literature have achieved great success in the context of MEC, but the following characteristics of IIoT make them still limited:
\begin{enumerate}
\item \textbf{Higher requirements for data security.} Industries (e.g., manufacturing, logistics, and transportation) often face even more serious data threats due to owning a vast amount of valuable information, so they possess the most urgent and critical requirement to increase security to protect data. Therefore, any form of disclosure\cite{yao2019federated,yoshida2019hybridfl,zhang2021client,zhao2021federated} or tampering\cite{duan2019astraea,wang2021non,wen2020unified} of the confidential raw data is not allowed.
\item \textbf{Rapidly changing streaming data on data-intensive sensors.} Data-intensive IIoT sensors such as OCR cameras require high sampling rates to capture the real-time phenomenon information and produce large amounts of data. In order to save storage space, these data will overwrite the old data that has been processed, forming a data stream similar to a first-in-first-out (FIFO) data queue. In such a dynamic environment, static approaches no longer work for IIoT, for example, \cite{sattler2020clustered}, and the K-Center clustering algorithm in \cite{wang2020optimizing}.
\item \textbf{Natural geographical clustering property.} In modern industrial parks, IIoT devices in each factory are geographically adjacent, which makes them naturally clustered into groups and interconnected by highly reliable networks, for example, through regional 5G base stations (see Fig. \ref{fig:industrial-park}). However, this valuable property is often ignored, and the rich communication resources at the edge are not fully utilized\cite{zhang2021client,zeng2022heterogeneous,nishio2019client}, which limits the improvement of industrial FL.
\end{enumerate}}

\new{The above characteristics distinguish ``FL in IIoT" from ``FL in non-IIoT" (e.g., FL in Edge). Few literature has been proposed to tackle the non-i.i.d. data challenge of FL in IIoT, such as approaches based on centroid distance weighted averaging\cite{zhang2020blockchain}, reinforcement learning\cite{pang2020realizing}, and kmeans-based cohorts\cite{hiessl2021cohort}. However, none of them take into account the changing local data distribution or the natural geographic clustering property of devices in IIoT. More importantly, they do not address the fundamental problem causing FL model performance to degrade, namely the divergence in class distributions\cite{zhao2018federated}}.

\begin{figure}[t]
\centering
\includegraphics[width=0.49\textwidth]{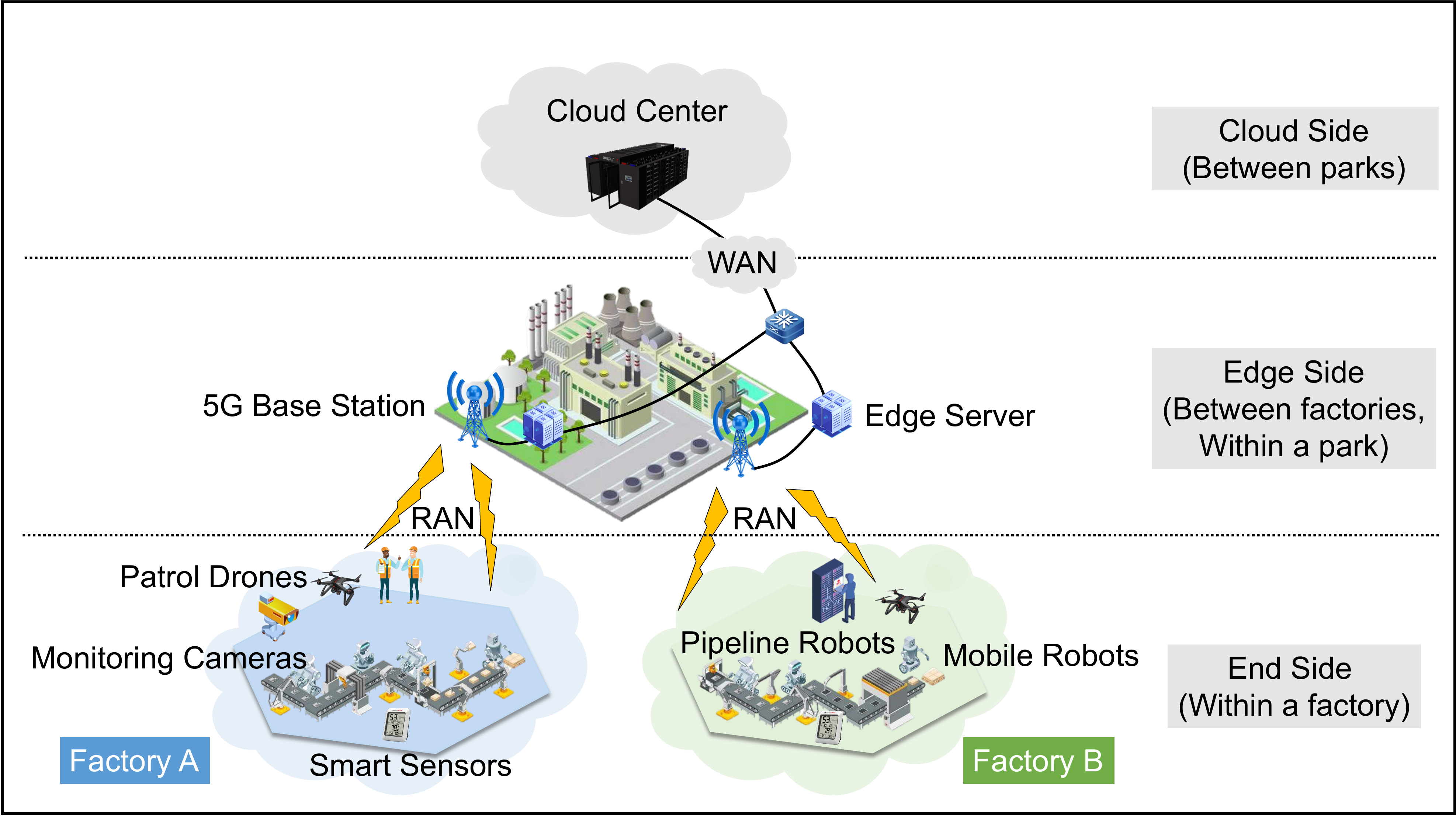}
\caption{A network architecture example in the modern industrial park. \new{End devices within each factory submit locally trained ML models to nearby 5G edge base stations. Edge servers synchronize these models and upload the synchronized models to the cloud server for global synchronization. The cloud center can be located on the cloud to synchronize ML models from multiple industrial parks, or it can be a micro data center located in an industrial park to synchronize ML models of edge servers in this park.}}
\label{fig:industrial-park}
\end{figure}

\new{To address the root cause of non-i.i.d. data, this paper aims to propose an effective approach to minimize the divergence in class distributions among heterogeneous devices. Taking advantage of the natural property of geographical clustering, we can select a subset of devices in each factory to construct ``FL super nodes" with consistent class distributions. These super nodes can be treated as homogeneous clients participating in FL training, without exposing the confidential industrial FL process in risky data manipulation.} However, designing such an approach is not trivial. Firstly, selecting a subset of devices in each group to minimize the class distribution divergence among groups is a 0-1 integer programming problem with vector weight constraints, which is proved to be NP-complete. \new{More challenging, this procedure needs to be invoked frequently to adapt to rapidly changing local data and the mobility of mobile IIoT devices (e.g., robots, drones), which places high demands on execution latency.} Secondly, even if class distributions among FL super nodes are forced to be homogeneous, devices' local data in each FL super node can still be skewed. If not handled properly, these challenges will still degrade FL model performance.

\new{To minimize the data heterogeneity among groups and realize efficient client selection, this paper proposes a novel gradient-based binary permutation optimizer \SAMPLERNAME~to solve the above NP-complete client selection problem. \SAMPLERNAME~runs a constraint-preserving gradient descent optimization procedure directly in the 0-1 integer space, and can build homogeneous FL super nodes in a very short time. Then, we propose \underline{Fed}erated \underline{G}roup \underline{S}ynchronization (\NAME), which is a hierarchical cloud-edge-end FL framework for 5G empowered modern industries, to improve industrial FL performance on non-i.i.d. data. \NAME~uses a compound-step synchronization protocol to train ML models, which can suppress data heterogeneity within and among FL super nodes. More specifically, \NAME~uses a single-step synchronization protocol (e.g., SSGD\cite{zinkevich2010parallelized}) within super nodes because of its robustness against data heterogeneity, and a multi-step synchronization protocol (e.g., FedAvg\cite{mcmahan2016communication}) among homogeneous super nodes to reduce communication overhead. Theoretical analysis shows that \NAME~has both the convergence upper bound and optimality gap better than FedAvg in the presence of non-i.i.d. data, and can be more time-efficient under a relaxed condition.} Finally, we evaluate \NAME~on the most widely adopted non-i.i.d. benchmark dataset FEMNIST\cite{caldas2018leaf}, and compare it with 10 advanced approaches, including FedAvg, FedMMD\cite{yao2018two}, FedFusion\cite{yao2019towards}, FedProx\cite{li2018federated}, IDA\cite{yeganeh2020inverse}, CGAU\cite{rieger2020client}, FedAvgM\cite{hsu2019measuring}, and FedAdagrad, FedAdam, FedYogi from \cite{reddi2020adaptive}. The main contributions of this paper are summarized as follows.

\begin{itemize}
\item \new{We propose a hierarchical cloud-edge-end FL framework \NAME~for 5G empowered modern industries, which uses a novel compound-step synchronization protocol to coordinate the training process within and among groups. The new protocol is robust against data heterogeneity and can effectively improve industrial FL performance.}
\item \new{We propose a novel \SAMPLERNAME~algorithm to select a subset of devices from each group to build homogeneous FL super nodes, which can find a desirable selection strategy in a very short time. \SAMPLERNAME~is a general optimizer for constrained 0-1 integer programming problems and can be used for other practical cases such as game matching.}
\item \new{We analyze the convergence rate and optimality gap of \NAME~and give a relaxed condition under which \NAME~is more time-efficient than FedAvg. Theoretical results show that \NAME~not only converges closer to the optimal, but also faster.}
\item Extensive experiments compared to 10 advanced approaches show that \NAME~improves FL accuracy by 3.5\% and reduces training rounds by 59\% on average. The results highlight the superior effectiveness and efficiency of \NAME~on non-i.i.d. data.
\end{itemize}
\section{Related Work}\label{section:related-work}
In this section, we categorize related works into four types according to the techniques they use.

\textbf{Data Sharing and Augmentation.}
This type of approach aims to minimize the class distribution divergence among devices by sharing or augmenting FL clients' local datasets. For the sharing-based approaches, Zhao \textit{et al.} propose to distribute a small portion of globally shared data (e.g., open available data) to clients' devices\cite{zhao2018federated}. Yao \textit{et al.} collect metadata shared by voluntary clients to perform controllable meta updating\cite{yao2019federated}. Yoshida \textit{et al.} reward FL clients for contributing local datasets and propose a hybrid learning mechanism wherein the server updates the model using the shared data and clients' local models\cite{yoshida2019hybridfl}. \new{These approaches achieve a considerable improvement in FL accuracy, but they are suspected of leaking private data due to the need to share clients' local datasets. Besides, open-available datasets do not always exist, especially in fields where data is highly confidential.}

For the augmentation-based approaches, Duan \textit{et al.} observe that the imbalance among different classes can also degrade FL accuracy\cite{duan2019astraea}. Hence, they augment classes with fewer samples by simply random offset, rotation, cropping, and scaling. Jeong \textit{et al.}\cite{jeong2018communication} propose to generate new samples using a globally trained conditional generative adversarial network (CGAN) to build unskewed local datasets. \new{Similarly, Wang \textit{et al.} generate synthetic data in the minority class based on linear interpolation to re-balance local datasets on edge devices\cite{wang2021non}.} \new{These approaches avoid the leakage of FL clients' private data. However, they still bring credibility crises. Speculative clients can use the synthetic data generated out of thin air to participate in FL training while hiding their original data. Also, they can pretend that the synthetic data is a large volume of high-quality data for more rewards. Therefore, these operations (i.e., data sharing, data augmentation) are high risk and should be prohibited.}

\textbf{Hyperparameter Tuning.}
Hyperparameters play an important role in FL training convergence. Some works have been explored in hyperparameter tunings, such as tunning the number of local iterations and the learning rate. Wang \textit{et al.} point out that the optimal performance can be achieved when the number of local iterations equals one\cite{wang2019adaptive}. However, the constrained resources (e.g., bandwidth, time, power) prevent us from doing this. In practice, large local iterations are more commonly used. For example, Yu \textit{et al.} carefully set the number of local iterations and obtain a considerable convergence rate\cite{yu2019parallel}. In addition, Li \textit{et al.} point out that decaying the learning rate is necessary for FL convergence with large local iterations\cite{li2019convergence}. For a strongly convex and smooth objective function, FedAvg can converge to the optimal after applying learning rate decay, with a convergence rate of $\mathcal{O}(1/\mathcal{T})$, where $\mathcal{T}$ is the total number of local updates on a single device. \new{These works give rigorous proofs for convergence analysis, which guides follow-up optimization on FL. However, carefully tuning these hyperparameters (e.g., number of local iterations, learning rate, and decay rate) requires multiple attempts and incurs high time costs.}

\textbf{Client-Side Adaption.}
This type of approach emphasizes that FL clients should adaptively retain global knowledge while improving local knowledge. Some examples are given to integrate them. Yao \textit{et al.} point out that the global model contains more global knowledge and should be kept as a reference, rather than simply thrown away. Based on this idea, they adopt a two-stream model to transfer the global knowledge to the local model\cite{yao2018two}. By minimizing the maximum mean discrepancy (MMD) loss, the two-stream model can extract more generalized features and learn better local representations. Then, in \cite{yao2019towards}, they use the $1\times1$ convolution, vector weighted average, and scalar weighted average operators to fuse the global and local features. Li \textit{et al.} point out that too many local updates will cause the FL training to diverge, especially under the non-i.i.d. data setting\cite{li2018federated}. Hence, they add a proximal penalty term to local objective loss functions to constrain the local model to be closer to the global model and avoid excessive divergence. Rieger \textit{et al.} point out that clients express representations in different patterns and their shared knowledge may be obfuscated after synchronization\cite{rieger2020client}. Hence, they adopt conditional gated activation units to enable clients to condition their units. In this way, clients can identify whether the global feature is expressed and how to modulate the global pattern. \new{These approaches impose more storage footprint and computation on resource-constrained client-side devices, requiring higher resource allocation and also higher energy consumption.}

\textbf{Server-Side Adaption.}
This type of approach explores how local models can be adaptively aggregated and how the global model can be adaptively optimized on the server-side. Yeganeh \textit{et al.} aggregate clients' local models according to their weights, by capitalizing an adaptive weighting approach based on the inverse distance between the local model parameters and the averaged model parameters\cite{yeganeh2020inverse}. By using this approach, out-of-distribution models will be weighed down and the global model can have a lower variance. The authors also explored the combination with other metrics, such as the training accuracy and the data size. \new{However, these variants did not perform well in our experiments, probably because some honest but ``out-of-distribution" devices were over-suppressed.} On the other hand, inspired by the ability of momentum accumulation to dampen oscillations\cite{nesterov2013gradient}, Hsu \textit{et al.} adopt the momentum optimizer on the server-side and observe a significant improvement in FL accuracy\cite{hsu2019measuring}. Then, Reddi \textit{et al.} introduce three advanced adaptive optimizers (i.e., Adagrad\cite{ward2019adagrad}, Adam\cite{kingma2014adam}, and Yogi\cite{zaheer2018adaptive}) to update the server-side global model\cite{reddi2020adaptive}. \new{These adaptive federated optimizers enable the use of adaptive learning rates for different gradients and achieve great success, but unfortunately, they also require careful tunning of initial learning rates, and we observed drastic accuracy oscillation in the experiments.}
\section{System Model}

\begin{table}
  \caption{Summary of Main Symbols.}
  \label{table:symbols}
  \centering
  \renewcommand{\arraystretch}{1}
  \begin{tabular}{p{30pt}<{\centering}|p{194pt}<{}}
    \hline
    \textbf{Symbol} & \textbf{Explanation} \\
    \hline\hline
    $\mathcal{L}$ & Loss function (e.g., cross-entropy). \\
    \hline
    $R$ & Maximum training rounds. \\
    \hline
    $T$ & Number of iterations in each round. \\
    \hline
    $K$, $K^m$ & Number of devices (in factory $m$). \\
    \hline
    $L$ & Number of devices to be selected per factory. \\
    \hline
    $L_{\mathrm{rnd}}$ & Number of devices to be randomly pre-sampled per factory. \\
    \hline
    $L_{\mathrm{sel}}$ & Number of devices to be selected by \SAMPLERNAME~per factory. \\
    \hline
    $M$ & Number of factories (also the number of groups). \\
    \hline
    $F$ & Number of classification classes. \\
    \hline
    $\eta$ & Local learning rate. \\
    \hline
    $\omega_t$ & Parameters of the global ML model at $t$-th iteration. \\
    \hline
    $\omega_t^{m}$ & Parameters of the ML model on BS $m$ at $t$-th iteration. \\
    \hline
    $\omega_t^{m,k}$ & Parameters of the local ML model on device $k$ in factory $m$ at $t$-th iteration. \\
    \hline
    $\mathcal{D}^{m,k}$ & Local dataset of device $k$ in factory $m$. \\
    \hline
    $\mathcal{D}^{m,k}_{t}$ & A mini-batch data of device $k$ in factory $m$ at $t$-th iteration. \\
    \hline
    $N^{m,k}$ & Size of local dataset of device $k$ in factory $m$. \\
    \hline
    $n$, $n^{m,k}$ & Batch data size (of device $k$ in factory $m$). \\
    \hline
    $n^m$ & Total size of data batches in factory $m$. \\
    \hline
    $\mathbf{a}^{m,k}_t$ & Data size vector of $F$ label classes of mini-batch data $\mathcal{D}^{m,k}_{t}$. \\
    \hline
    $\mathbf{A}^m_t$ & Data size matrix of $\mathbf{a}^{m,k}_t$ of $K^m-L_{\mathrm{rnd}}$ devices. \\
    \hline
    $\mathbf{b}^m_t$ & Total data size vector of the pre-sampled $L_{\mathrm{rnd}}$ devices. \\
    \hline
    $\mathcal{C}^m$ & Set of all devices in factory $m$. \\
    \hline
    $\mathcal{C}^m_t$ & Set of $L$ selected devices in factory $m$ at $t$-th iteration. \\
    \hline
    $\mathcal{P}^{m,k}$ & Local data distribution of device $k$ in factory $m$. \\
    \hline
    $\mathcal{P}^{m,k}_t$ & Local data distribution of mini-batch data $\mathcal{D}^{m,k}_{t}$. \\
    \hline
    $\mathcal{P}^{m}_t$ & Mean data distribution of $\mathcal{P}^{m,k}_t$ over selected devices $\mathcal{C}^m_t$. \\
    \hline
    $\mathcal{P}^{\mathrm{real}}$ & Real-world global data distribution. \\
    \hline
  \end{tabular}
\end{table}

IoT devices in modern industrial parks can be divided into two types: Fixed devices (e.g., monitoring cameras, temperature and humidity sensors) and mobile devices (e.g., patrol drones and logistics robots). In the industrial park, due to the advantages of improved performance, reduced communication cost, and decentralized scalability, FL plays an important role in many industrial applications. For example, an anomaly detection application based on on-device federated monitors could be applied for IIoT scenarios, where sensing and monitoring devices may locate in harsh environments with high voltage and high radiation, and they may move around the factory, making them impractical to access wired networks\cite{liu2020deep}. 5G mobile networks have been considered to be enhanced to support key performance features of industrial applications such as high throughput, low latency, and high scalability\cite{akpakwu2017survey}. These features enable industrial FL applications to transmit model data of a large number of IIoT devices at a high cycle frequency, with a high data rate and low latency. Therefore, we consider a hierarchical cloud-edge-end network architecture empowered by 5G cellular wireless networks for training industrial FL applications, as shown in Fig. \ref{fig:industrial-park}. 


In this case, a modern industrial park has $M$ factories, each factory $m$ has $K^m$ smart devices. We consider the devices in the same factory as a group. These devices are equipped with embedded computing chips to perform lightweight processing, for example, training ML models on local data streams. The devices connect to the nearby base station (BS) (also identified as $m$) through 5G cellular wireless networks. Then, BSs communicate FL model data with the cloud center through the Internet. \new{Specifically, the device $k$ of factory $m$ collects streaming sensory data in real-time and changes the local dataset $\mathcal{D}^{m,k}$, whose class distribution is defined as $\mathcal{P}^{m,k}$.} The class distributions of different devices can be highly heterogeneous due to diverse local usage patterns. Thus, we have $\mathcal{P}^{m_1,k_1}\ne\mathcal{P}^{m_2,k_2}\ne\mathcal{P}^{\mathrm{real}} (\forall m_1\ne m_2, k_1\ne k_2)$, where $\mathcal{P}^{\mathrm{real}}$ is the real-world global class distribution. The goal of industrial FL is to find the optimal model parameters $\omega^*$ that can minimize the global loss function,
\begin{equation}
\omega ^*\triangleq \underset{\omega}{\mathrm{arg}\min}\sum_{m=1}^{M}{\sum_{k=1}^{K^m}{\mathcal{L}\left( \omega ,\mathcal{D}^{m,k} \right)}}.
\end{equation}

The traditional workflow is briefly described below. In each round, a small subset of devices is randomly selected to participate in FL training. These devices utilize local datasets for several epochs to train their local ML models and upload these local models to connected BSs. BSs aggregate these local models and upload the aggregated models to the top server in the cloud. The top server globally aggregates BSs' models, updates the global ML model, and synchronizes the updated model back to all BSs and end devices. These steps are repeated until the global model parameters $\omega ^*$ converge. However, this approach causes performance degradation to industrial FL due to data heterogeneity among devices, and it ignores the streaming nature of industrial data. Hence, in Section \ref{section:framework}, a data heterogeneity-robust federated group synchronization approach is presented to address this issue.

We summarize the main symbols used in this paper in Table \ref{table:symbols}. The framework and workflow of the proposed \NAME~are illustrated in Fig. \ref{fig:framework}.
\begin{figure}[t]
\centering
\includegraphics[width=0.5\textwidth]{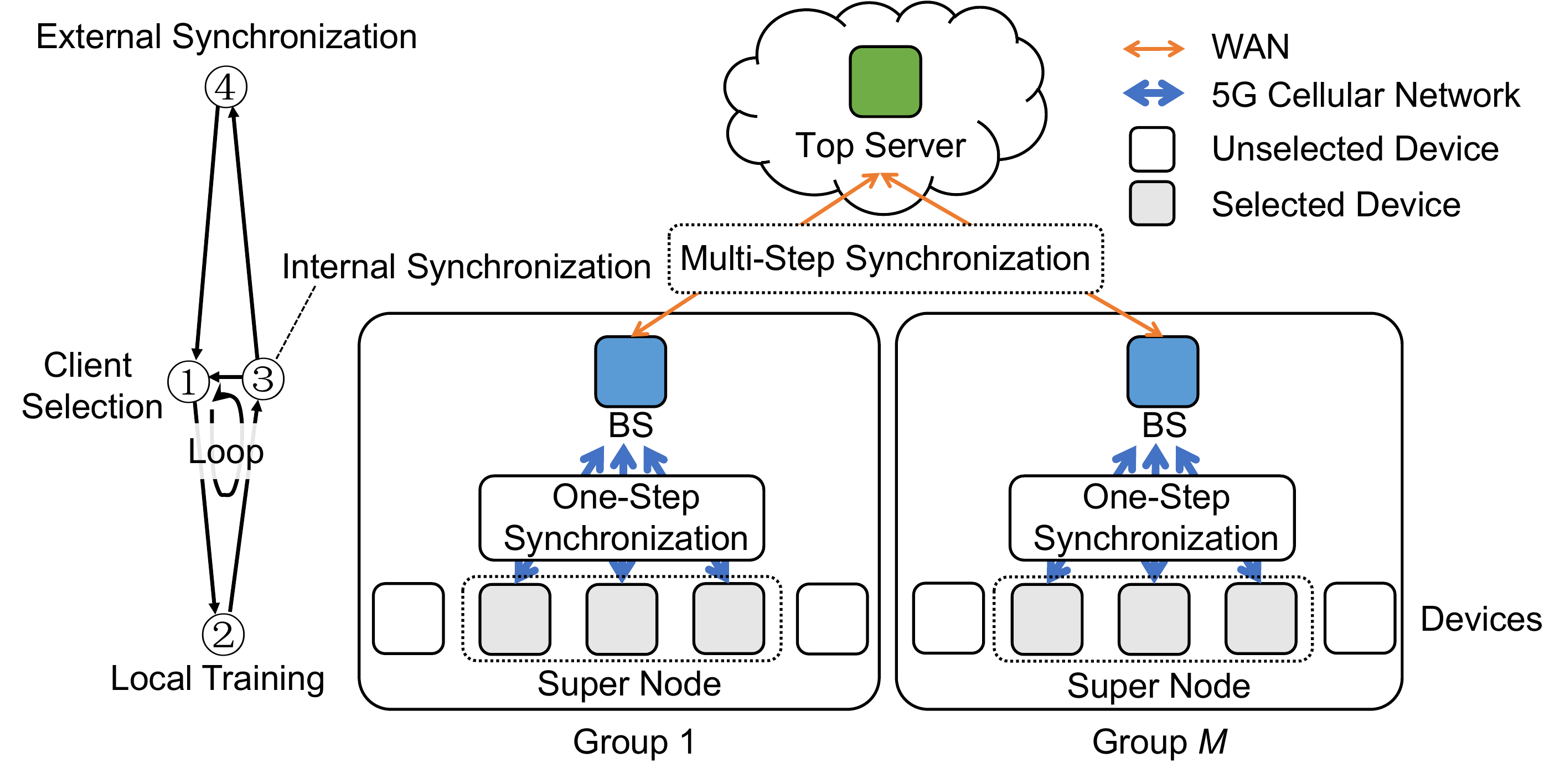}
\caption{Overview of \NAME~framework and workflow. \textcircled{1} Each BS selects $L$ devices to form a super node. \textcircled{2} Each selected device trains its local model for one mini-batch SGD step. \textcircled{3} Each BS synchronizes local models in its group. \textcircled{4} Top server synchronizes models of BSs. \textcircled{1}\textcircled{2}\textcircled{3} loop $T$ times every \textcircled{4}.}
\label{fig:framework}
\end{figure}

\section{\NAME: Framework and Workflow}\label{section:framework}
The core idea is to strategically select a small subset of devices in each group to form FL super nodes with homogeneous data distributions. Then, these super nodes can be regarded as homogeneous clients to participate in FL. \new{To resolve the heterogeneity in local datasets of devices inside each super node, the one-step synchronization protocol (e.g., SSGD) can be useful because it was proved to be equivalent to centralized SGD, which gives it robustness against data heterogeneity. Meanwhile, the multi-step synchronization protocol (e.g., FedAvg) can be used to keep \NAME~communication efficient, since the class distributions of super nodes are aligned. In this way, the problem of data heterogeneity is decomposed from the entire population of devices to a small number of devices in multiple groups, making it efficiently and effectively solved by the compound-step synchronization protocol. In this way, the performance degradation of industrial FL is addressed.}

The detailed design is given in Alg. \ref{alg:fedgs}. In the initialization stage, the top server first initializes the global ML model parameters $\omega_0$ and synchronizes $\omega^m_0\gets\omega_0$ to BSs. Then, it collects local class distributions $\mathcal{P}^{m,k} (\forall m,k)$ from all devices to estimate the real-world global class distribution $\mathcal{P}^{\mathrm{real}}$,
\begin{equation}\label{eq:p_real}
\mathcal{P}^{\mathrm{real}}=\mathrm{norm}\left( \sum_{m\in M}{\sum_{k\in K^m}{N^{m,k}\mathcal{P}^{m,k}}} \right), 
\end{equation}
where $N^{m,k}$ is the local data size of device $k$ in group $m$, and $\mathrm{norm}(\cdot)$ is a probability normalization function.

\textbf{Client Selection.}
In each iteration $t$, each BS $m$ selects $L$ devices from its group $\mathcal{C}^m$ to obtain a homogeneous super node $\mathcal{C}^m_t$ via the Select-Clients-Via-\SAMPLERNAME~interface. The detailed algorithm \SAMPLERNAME~is presented in Section \ref{section:client-selection}.

\textbf{Local Training.} In each group $m$, each selected device $k$ fetches a mini-batch of data $\mathcal{D}^{m,k}_t$ from local dataset $\mathcal{D}^{m,k}$ with batch size $n^{m,k}$. \new{These mini-batch streaming data are one-shot and will not be used again.} Then, the device $k$ downloads the model $\omega^{m,k}_{t-1}\gets\omega^{m}_{t-1}$ from the connected BS $m$ and trains $\omega^{m,k}_{t-1}$ for one mini-batch gradient descent step with learning rate $\eta$, 
\begin{equation}
\label{eq:sgd}
\omega _{t}^{m,k}\gets \omega _{t-1}^{m,k}-\frac{\eta}{n^{m,k}}\nabla _{\omega}\mathcal{L}\left( \omega _{t-1}^{m,k},\mathcal{D}_{t}^{m,k} \right).
\end{equation}

\textbf{Internal Synchronization.} The locally trained model $\omega^{m,k}_{t}$ will be uploaded to BS $m$ for internal synchronization,
\begin{equation}
\label{eq:internal-aggr}
\omega _{t}^{m}\gets \sum_{k\in \mathcal{C}_{t}^{m}}{\frac{n^{m,k}}{n^m}\omega _{t}^{m,k}}, 
\end{equation}
where $n^m=\sum_{k\in \mathcal{C}_{t}^{m}}{n^{m,k}}$ is the total data size of all used mini-batches in $\mathcal{C}_t^m$. Then, the BS $m$ updates its model with $\omega _{t}^{m}$ and synchronizes $\omega _{t}^{m}$ in its group. 

We call the above client selection, local training, and internal synchronization as a \textit{one-step synchronization iteration} because the local update on each device is only performed once before each synchronization. The one-step synchronization will loop $T$ iterations before each round of external synchronization. 

\textbf{External Synchronization.} For every time that the one-step synchronization is performed $T$ iterations, BSs can upload their model $\omega _{t}^{m}$ to the top server for global aggregation,
\begin{equation}
\label{eq:external-aggr}
\omega _t\gets \frac{1}{M}\sum_{m\in M}{\omega _{t}^{m}}.
\end{equation}
The globally aggregated model $\omega _t$ will be used to update the global model on the top server and synchronized to BSs.

Since the external synchronization is performed every $T$ one-step synchronization iterations, we call it a \textit{multi-step synchronization round}. The above steps will be repeated $R$ rounds (i.e., $TR$ iterations) to obtain the converged model parameters $\omega^*\gets\omega_{TR}$.

The above workflow can be seen as an equivalent version of FedAvg, which performs local updates on FL super nodes for $T$ iterations with larger batch sizes, but homogeneous local datasets among super nodes. By capitalizing on an effective client selection strategy to make these super nodes homogeneous, the FL training process can be robust against data heterogeneity, and FL model performance can be improved. In the following section, we give our solution \SAMPLERNAME.

\begin{algorithm}[t]
\caption{Federated-Group-Synchronization (Main)}\label{alg:fedgs} 
\begin{algorithmic}[1] 
\Require Number of iterations in each round $T$; Maximum training rounds $R$; Number of groups $M$; Number of selected devices per group $L$.
\Ensure Well-trained global FL model $\omega_{TR}$.
\State Initialize $\omega_0$ and $\omega_0^m\gets\omega_0$ and estimate $\mathcal{P}^{\mathrm{real}}$ by Eq. \eqref{eq:p_real};
\For{each internal synchronization $t=1,\cdots,TR$}
\For{each BS $m$ in $1,\cdots,M$ in parallel}
\State \textbf{Client Selection:} Select $L$ devices from group $\mathcal{C}^m$ to form a homogeneous super node $\mathcal{C}^m_t$: $\mathcal{C}^m_t\gets$ \colorbox{green}{Select-Clients-Via-\SAMPLERNAME}($L, \mathcal{C}^m,\mathcal{P}^{\mathrm{real}}$);
\For{each device $k$ \textbf{in} $\mathcal{C}^m_t$ in parallel}
\State \textbf{Local Training:} Fetch a mini-batch of data $\mathcal{D}_t^{m,k}$ and update the local model $\omega^{m,k}_{t}$ by Eq. \eqref{eq:sgd};
\EndFor
\State \textbf{Internal Synchronization:} BS $m$ aggregates local models $\{\omega^{m,k}_t|\forall k\in \mathcal{C}^m_t\}$ by Eq. \eqref{eq:internal-aggr} and update $\omega^{m}_t$;
\If{$t\%T==0$}
\State \textbf{External Synchronization:} The top server globally aggregates $\{\omega^m_t|\forall m\}$ by Eq. \eqref{eq:external-aggr}, and synchronizes the updated global model $\omega_t$ to BSs: $\omega^m_{t}\gets\omega_{t}$;
\EndIf
\EndFor
\EndFor
\State \Return $\omega_{TR}$;
\end{algorithmic} 
\end{algorithm}
\section{Client Selection Via \SAMPLERNAME}\label{section:client-selection}
In this section, we formulate the client selection problem as a 0-1 integer programming problem with vector weight constraints, and present our novel Gradient-based Binary Permutation approach, namely \SAMPLERNAME, to solve this problem in an acceptable short time, with a desirable solution.

\subsection{Problem Modeling}
Given a factory (i.e., group) $m$ with $K^m$ industrial devices $\mathcal{C}^m$ ($\left|\mathcal{C}^m\right|=K^m$). The next batch data of device $k$ follows the distribution $\mathcal{P}^{m,k}_t$ ($\ne \mathcal{P}^{\mathrm{real}}$) and the data size vector of $F$ label classes is $\mathbf{a}^{m,k}_t=n^{m,k}\mathcal{P}^{m,k}_t\in \mathbb{Z}^{F\times 1}$. \new{Our goal is to select $L$ devices from group $m$ at iteration $t$ to form a FL super node $\mathcal{C}^m_t$} whose overall class distribution $\mathcal{P}^m_t$ satisfies,
\begin{equation}\label{eq:goal}
\underset{\mathcal{P}^m_t\triangleq \mathbb{E}\left[ \left\{ \mathcal{P}^{m,k}_t|\forall k\in \mathcal{C}_{t}^{m} \right\} \right]}{\min}\,\,\left\| \mathcal{P}^m_t-\mathcal{P}^{\mathrm{real}} \right\| _{L_2}.
\end{equation}

\new{Note that if $\mathcal{P}^{m,k}_t$ is fixed, Eq. \eqref{eq:goal} will always find a fixed device set $\mathcal{C}^m_t$ and other devices $\mathcal{C}^m\backslash \mathcal{C}^m_t$ will have no chance to be selected. To keep the randomness of client selection so that each device has the same probability to be selected, we use a trick to randomly pre-sample $L_{\mathrm{rnd}}$ devices before strategically selecting the remaining $L_{\mathrm{sel}}=L-L_{\mathrm{rnd}}$ devices.} Formally speaking, in group $m$, we first sample $L_{\mathrm{rnd}}$ devices at random to obtain $\mathcal{C}^m_{\mathrm{rnd}}$, whose next data batches have a total data size of vector $\mathbf{b}^m_t\in \mathbb{Z}^{F\times 1}$. Then, $L_{\mathrm{sel}}$ devices are further selected from the remaining $K^m-L_{\mathrm{rnd}}$ devices $\mathcal{C}^m\backslash\mathcal{C}^m_{\mathrm{rnd}}$, whose data size matrix is $\mathbf{A}^m_t=\left[ \mathbf{a}^{m,1}_t,\mathbf{a}^{m,2}_t,\cdots ,\mathbf{a}^{m,K^m-L_{\mathrm{rnd}}}_t \right] \in \mathbb{Z}^{F\times \left( K^m-L_{\mathrm{rnd}} \right)}$, with the goal to minimize Eq. \eqref{eq:goal}.

We use the following mathematical model to describe the above problem. Let $\mathbf{e}_{K^m-L_{\mathrm{rnd}}}^{T}\in 1^{1\times \left( K^m-L_{\mathrm{rnd}} \right)}$ and $\mathbf{e}_{F}^{T}\in 1^{1\times F}$, the objective is to find a solution $\mathbf{x}^m_t\in \mathbb{Z}^{\left( K^m-L_{\mathrm{rnd}} \right) \times 1}$, where $\mathbf{x}_t^{m}(i)\in\{0,1\}$ and $\mathbf{e}_{K^m-L_{\mathrm{rnd}}}^{T}\cdot \mathbf{x}^m_t=L_{\mathrm{sel}}$, that
\begin{align}
\label{eq:complex-goal}
\underset{\mathbf{x}^m_t}{\min}\quad &\left\| \frac{\mathbf{A}^m_t\mathbf{x}^m_t+\mathbf{b}^m_t}{\mathbf{e}_{F}^{T}\left( \mathbf{A}^m_t\mathbf{x}^m_t+\mathbf{b}^m_t \right)}-\mathcal{P}^{\mathrm{real}} \right\| _{L_2},
\\
\mathrm{s}.\mathrm{t}.\quad &\mathbf{x}_t^{m}(i)\in \left\{ 0,1 \right\}, 
\\
&\mathbf{e}_{K^m-L_{\mathrm{rnd}}}^{T}\cdot \mathbf{x}^m_t=L_{\mathrm{sel}}.
\end{align}
Let batch sizes of all data batches be the same $n=n^{m,k}\,\,\left( \forall m,k \right) $, then we have $\mathbf{e}_{F}^{T}\left( \mathbf{A}^m_t\mathbf{x}^m_t+\mathbf{b}^m_t \right) =nL$ and a simplified model,
\begin{align}
\label{eq:simple-goal}
\underset{\mathbf{x}^m_t}{\min}\quad &\left\| \mathbf{A}^m_t\mathbf{x}^m_t-\mathbf{y}^m_t \right\| _{L_2},
\\
\label{eq:y}
\mathrm{s}.\mathrm{t}.\quad &\mathbf{y}^m_t=nL\mathcal{P}^{\mathrm{real}}-\mathbf{b}^m_t,
\\
\label{eq:0-1-constraint}
&\mathbf{x}_t^{m}(i)\in \left\{ 0,1 \right\},
\\
\label{eq:weight-constraint}
&\mathbf{e}_{K^m-L_{\mathrm{rnd}}}^{T}\cdot \mathbf{x}^m_t=L_{\mathrm{sel}}.
\end{align}

\new{Note that in the mathematical model above, we considered data size and data quality in the objective (i.e., minimizing the distribution divergence) for client selection, but assumed that IIoT devices have similar hardware capabilities. However, if system heterogeneity should be considered, ESync\cite{li2020esync} is compatible and can be useful.} Then, we give a simple proof to show that the above problem is NP-complete.

\begin{lemma}[Problem A]
\label{lemma:np-complete}
Given an integer matrix $\mathbf{A}$ and an integer vector $\mathbf{y}$, the goal is to find whether there is a 0-1 vector $\mathbf{x}$ such that  $\mathbf{Ax}=\mathbf{y}$. This 0-1 integer programming problem is NP-complete\cite{rice1981}.
\end{lemma}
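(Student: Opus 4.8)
The plan is to establish the two defining properties of NP-completeness in turn: membership in NP, and NP-hardness. For membership, I would take the candidate 0-1 vector $\mathbf{x}$ itself as the certificate. It has size linear in the number of columns of $\mathbf{A}$, and verifying the proposed equality $\mathbf{A}\mathbf{x}=\mathbf{y}$ requires only a single matrix-vector product together with a componentwise comparison, both polynomial in the bit-length of the input. Hence Problem A lies in NP.

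For NP-hardness, I would reduce from \textsc{Subset Sum}, which is a classical NP-complete problem. Given an instance consisting of positive integers $a_1,\dots,a_n$ and a target $t$, I construct the single-row matrix $\mathbf{A}=[a_1,\dots,a_n]$ and the one-entry vector $\mathbf{y}=[t]$. A 0-1 vector $\mathbf{x}$ then satisfies $\mathbf{A}\mathbf{x}=\mathbf{y}$ if and only if the selected index set $\{i:\mathbf{x}(i)=1\}$ is a subset summing exactly to $t$, so the two instances are equivalent. The construction is computable in linear time, giving a valid polynomial reduction; in fact \textsc{Subset Sum} is literally the one-row restriction of Problem A, so any algorithm deciding Problem A also decides \textsc{Subset Sum}. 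Combining this with membership in NP yields NP-completeness.

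The main subtlety I would be careful about is the encoding of the numbers: \textsc{Subset Sum} is NP-complete under the standard binary encoding of the integers, so I would state the reduction with the entries written in binary to keep it a genuine polynomial-time reduction rather than a pseudo-polynomial one. If one instead wants hardness for a structured 0-1 matrix without large entries — which is closer to the incidence structure of the data-size matrix $\mathbf{A}^m_t$ arising in our client-selection formulation — I would switch the source problem to \textsc{Exact Cover by 3-Sets}, taking $\mathbf{A}$ to be the element-set incidence matrix and $\mathbf{y}$ the all-ones vector, so that a 0-1 solution corresponds precisely to an exact cover. I expect the only real obstacle to be checking the forward and backward directions of this correspondence and confirming polynomiality; both are routine, since the hardness is inherited directly from a restriction of the present problem.
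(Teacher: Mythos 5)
Your argument is correct, but note that the paper does not actually prove this lemma at all: it is stated as a known result and dispatched with the citation to Rice's paper on 0--1 integer programming, and the paper's own proof effort is reserved for the reduction from Problem A to Problem B in the subsequent proposition. What you supply is therefore a self-contained replacement for the citation, and it is the standard one: membership in NP via the vector $\mathbf{x}$ as a polynomial-size, polynomially checkable certificate, and NP-hardness by observing that \textsc{Subset Sum} is exactly the one-row special case of $\mathbf{A}\mathbf{x}=\mathbf{y}$. Your caveat about binary encoding is the right one to flag, since \textsc{Subset Sum} is only weakly NP-hard and the reduction would be vacuous under unary encoding. The alternative reduction from \textsc{Exact Cover by 3-Sets} is a genuine strengthening worth keeping in mind: it shows the problem remains NP-complete even when $\mathbf{A}$ is a 0--1 incidence matrix and $\mathbf{y}$ is the all-ones vector, which is closer in spirit to the bounded-entry, structured matrices $\mathbf{A}^m_t$ that arise in the client-selection application, whereas the \textsc{Subset Sum} reduction needs entries of superpolynomial magnitude to be meaningful. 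Either route validly discharges the lemma; what the citation buys the paper is brevity, and what your proof buys is independence from the reference and a hardness guarantee that survives restricting to small integer entries.
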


\begin{proposition}[Problem B]
\label{prop:np-complete}
Let Lemma \ref{lemma:np-complete} hold and constrain the number of 1 in $\mathbf{x}$ to be $L_{\mathrm{sel}}$ (Eq. \eqref{eq:weight-constraint}). This variant problem is at least NP-complete.
\end{proposition}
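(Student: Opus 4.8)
The plan is to establish Proposition \ref{prop:np-complete} by the standard two-part argument for NP-completeness: first show that Problem B lies in NP, and then show it is NP-hard via a polynomial-time reduction from Problem A, which is NP-complete by Lemma \ref{lemma:np-complete}. Membership in NP is immediate: given a candidate vector $\mathbf{x}\in\{0,1\}^{n}$, one can verify in polynomial time both that $\mathbf{A}\mathbf{x}=\mathbf{y}$ and that the number of ones $\mathbf{e}^{T}\mathbf{x}$ equals $L_{\mathrm{sel}}$ (a single matrix-vector product and a bit count), so $\mathbf{x}$ serves as a valid certificate.

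The core of the proof is the reduction, and the idea is to ``absorb'' the extra cardinality constraint of Problem B using free dummy variables. Given an arbitrary instance of Problem A, say a matrix $\mathbf{A}\in\mathbb{Z}^{F\times n}$ and a target vector $\mathbf{y}$ (asking whether some $\mathbf{x}\in\{0,1\}^{n}$ satisfies $\mathbf{A}\mathbf{x}=\mathbf{y}$), I would build an instance of Problem B by padding $\mathbf{A}$ with $n$ all-zero columns, i.e. $\hat{\mathbf{A}}=\left[\mathbf{A},\,\mathbf{0}_{F\times n}\right]$, keeping the same target $\mathbf{y}$, and fixing the cardinality budget to $L_{\mathrm{sel}}=n$. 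The decision vector becomes $\hat{\mathbf{x}}=(\mathbf{x},\mathbf{z})\in\{0,1\}^{2n}$, where $\mathbf{z}$ collects the $n$ dummy variables. This construction is clearly computable in time polynomial in the size of the original instance.

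Next I would verify that the two instances are equivalent. For the forward direction, suppose Problem A has a feasible $\mathbf{x}^{\star}$ with $p\triangleq\mathbf{e}^{T}\mathbf{x}^{\star}$ ones, where $0\le p\le n$; setting exactly $n-p$ of the dummy entries $\mathbf{z}$ to one yields $\hat{\mathbf{x}}$ with $p+(n-p)=n=L_{\mathrm{sel}}$ ones, and since the padded columns are zero we still have $\hat{\mathbf{A}}\hat{\mathbf{x}}=\mathbf{A}\mathbf{x}^{\star}=\mathbf{y}$, so $\hat{\mathbf{x}}$ solves Problem B. For the backward direction, any solution $\hat{\mathbf{x}}=(\mathbf{x},\mathbf{z})$ of Problem B satisfies $\mathbf{A}\mathbf{x}=\hat{\mathbf{A}}\hat{\mathbf{x}}=\mathbf{y}$ (again because the $\mathbf{z}$-columns vanish), so its restriction $\mathbf{x}$ is feasible for Problem A. Thus Problem A is a yes-instance if and only if the constructed Problem B is, and combining the reduction with membership in NP gives NP-completeness.

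The step I expect to be the main obstacle, subtle rather than hard, is guaranteeing that the cardinality constraint can be met no matter how many ones an A-solution happens to use. This is precisely what the $n$ zero-padded ``toggle'' variables buy us: because they never affect the product $\hat{\mathbf{A}}\hat{\mathbf{x}}$, the number of active dummies ranges freely over $\{0,\dots,n\}$, so any original solution count $p\in\{0,\dots,n\}$ can be topped up to exactly $L_{\mathrm{sel}}=n$. I would state the bound $0\le n-p\le n$ explicitly, since it is the only place where the choice $L_{\mathrm{sel}}=n$ and the number of padding columns must be matched; a mismatch there would break feasibility in one direction of the equivalence.
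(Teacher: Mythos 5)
Your proof is correct, but it takes a genuinely different route from the paper. The paper argues via a Turing-style reduction: to decide an instance of Problem A, it invokes a solver for Problem B a total of $K$ times, once for each cardinality value $L_{\mathrm{sel}}=1,\dots,K$, and answers YES if any call achieves objective value $0$. You instead give a single-call many-one (Karp) reduction: pad $\mathbf{A}$ with $n$ all-zero columns, keep $\mathbf{y}$, fix $L_{\mathrm{sel}}=n$, and use the zero columns as free toggles so that any A-solution with $p$ ones can be topped up to exactly $n$ ones without perturbing the product. You also explicitly verify membership in NP, which the paper omits. Your version is the more standard and formally tighter argument, since NP-completeness is conventionally defined via many-one reductions and requires the NP-membership half; the paper's multi-call reduction establishes hardness only in the Cook sense (which is why its statement hedges with ``at least NP-complete''), though it has the virtue of requiring no modification to the instance at all. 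Your one point of care --- matching the number of padding columns to the chosen budget $L_{\mathrm{sel}}=n$ so that $0\le n-p\le n$ always holds --- is exactly the right place to be careful, and you handle it correctly.
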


\begin{proof}
To solve problem A, we can solve problem B for $K$ times with $L_{\mathrm{sel}}=1,2,\cdots,K$, where $K$ is the size of vector $\mathbf{x}$. This input transformation has a linear complexity $\mathcal{O}\left( K \right)$. Problem B outputs YES if Eq. \eqref{eq:simple-goal} could reach 0, otherwise, it outputs NO. This output transformation has a constant complexity $\mathcal{O}(1)$. Hence, problem A can reduce to problem B in a polynomial complexity, which makes problem B also NP-complete.
\end{proof}

We can see from Proposition \ref{prop:np-complete} that \new{it is almost impossible to find the optimal solution in a polynomial complexity. To make \NAME~time efficient, a sub-optimal but fast solution is preferred,} as described in the following subsection.

\subsection{Gradient-based Binary Permutation Client Selection}
To make the NP-complete client selection problem solvable, in this paper, we propose a novel gradient-based approximate approach, namely \SAMPLERNAME. The core idea is to permute (0,1) pairs of binary selection variables in $\mathbf{x}$ with the steepest opposite gradients. In other words, the variable $\mathbf{x}(i)$ with the selection value of 0 and the smallest gradient will be permuted with the variable $\mathbf{x}(j)$ with the selection value of 1 and the largest gradient. In this way, the number of variables whose selection value equals one (i.e., the vector weight constraint) is maintained, and constraints \eqref{eq:0-1-constraint} and \eqref{eq:weight-constraint} can be satisfied. This binary permutation operation will be performed iteratively to minimize the objective Eq. \eqref{eq:simple-goal}.

\begin{algorithm}[t] 
\caption{\colorbox{green}{Select-Clients-Via-\SAMPLERNAME}} 
\label{alg:gbp-cs} 
\begin{algorithmic}[1] 
\Require Number of selected devices per group $L$; Device set $\mathcal{C}^m$ of group $m$; Global class distribution $\mathcal{P}^{\mathrm{real}}$.
\Ensure $L$ selected devices $\mathcal{C}^m_t$.
\State Pre-sample $L_{\mathrm{rnd}}$ devices $\mathcal{C}^m_{\mathrm{rnd}}$ at random, and construct $\textbf{b}^m_t$ from $\mathcal{C}^m_{\mathrm{rnd}}$ and $\textbf{A}^m_t$ from $\mathcal{C}^m\backslash\mathcal{C}^m_{\mathrm{rnd}}$;
\State Initialize $s\gets 1$, $\mathbf{y}^m_t$ by Eq. \eqref{eq:y}, and $\mathbf{x}_1$ by Eq. \eqref{eq:mp-init-solution};
\State Calculate the distance $d_s=\left\| \mathbf{A}^m_t\mathbf{x}_s-\mathbf{y}^m_t \right\| _{L_2}$;
\Repeat
\State Calculate the gradient $\boldsymbol{g}_s=\nabla _{\mathbf{x}}d_s$;
\State Select an index pair $\left( i_{0\rightarrow 1},i_{1\rightarrow 0} \right)$ by Eq. \eqref{eq:i-0to1}-\eqref{eq:i-1to0};
\State Make a copy $\mathbf{x}_{s+1}\gets\mathbf{x}_{s}$ and permute $\mathbf{x}_{s+1}\left( i_{0\rightarrow 1} \right)$ and $\mathbf{x}_{s+1}\left( i_{1\rightarrow 0} \right)$ by Eq. \eqref{eq:permute};
\State Update the distance $d_{s+1}=\left\| \mathbf{A}^m_t\mathbf{x}_{s+1}-\mathbf{y}^m_t \right\| _{L_2}$;
\State $s\gets s+1$;
\Until{$d_s > d_{s-1}$;}
\State Construct the set of $L$ selected devices $\mathcal{C}^m_t=\mathcal{C}^m_{\mathrm{rnd}}\cup\mathcal{C}^m_{\mathrm{sel}}$, where $\mathcal{C}^m_{\mathrm{sel}}$ is defined as Eq. \eqref{eq:C_sel};
\State \Return $\mathcal{C}^m_t$;
\end{algorithmic} 
\end{algorithm}

The pseudo code of \SAMPLERNAME~is given in Alg. \ref{alg:gbp-cs}. Given the data size matrix $\mathbf{A}^m_t$ and $\mathbf{y}^m_t=nL\mathcal{P}^{\mathrm{real}}-\mathbf{b}^m_t$, \SAMPLERNAME~first initializes the solution variable $\mathbf{x}_1$ as follows,
\begin{equation}\label{eq:mp-init-solution}
\mathbf{x}_1\triangleq\left\{\mathcal{T}_{L_{\mathrm{sel}}}(\tilde{\mathbf{x}}_1) | \tilde{\mathbf{x}}_1={(\mathbf{A}^m_t)}^{-1}\mathbf{y}^m_t\right\},
\end{equation}
where ${(\mathbf{A}^m_t)}^{-1}\mathbf{y}^m_t$ is the Moore-Penrose Inverse solution, $\mathcal{T}_{L_{\mathrm{sel}}}(\tilde{\mathbf{x}}_1)$ means to set the largest $L_\mathrm{sel}$ values of $\tilde{\mathbf{x}}_1$ to 1, and the others to 0. Then, \SAMPLERNAME~calculates the objective distance $d_s=\left\| \mathbf{A}^m_t\mathbf{x}_s-\mathbf{y}^m_t \right\| _{L_2}$ and the gradient $\boldsymbol{g}_s=\nabla _{\mathbf{x}}d_s$. 

The gradient $\boldsymbol{g}_s(i)$ indicates the opposite direction in which $\mathbf{x}_s(i)$ should be updated. The greater the absolute value of $\boldsymbol{g}_s\left( i \right)$, the smaller the $d_s$ can be obtained by updating $\mathbf{x}_s\left( i \right)$, so $\boldsymbol{g}_s\left( i \right)$ appears as a key gradient\cite{zhou2021dgt}. Based on this idea, \SAMPLERNAME~selects a pair of selection variables ($\mathbf{x}_s\left( i_{0\rightarrow 1} \right) , \mathbf{x}_s\left( i_{1\rightarrow 0} \right) $) with opposite key gradients for permutation. More specifically, $i_{0\rightarrow 1}$ is the identity of the device with the selection value of 0 and the smallest gradient,
\begin{equation}\label{eq:i-0to1}
i_{0\rightarrow 1}\triangleq\underset{i}{\mathrm{arg}\min}\left\{ \boldsymbol{g}_s\left( i \right) |\mathbf{x}_s\left( i \right) =0, \forall i\in[1,K^m-L_{\mathrm{rnd}}] \right\}.
\end{equation}
Similarly, $i_{1\rightarrow 0}$ is the identity of the device with the selection value of 1 and the largest gradient,
\begin{equation}\label{eq:i-1to0}
i_{1\rightarrow 0}\triangleq\underset{i}{\mathrm{arg}\max}\left\{ \boldsymbol{g}_s\left( i \right) |\mathbf{x}_s\left( i \right) =1, \forall i\in[1,K^m-L_{\mathrm{rnd}}] \right\}.
\end{equation}
Then, \SAMPLERNAME~permutes the values of $\mathbf{x}_s\left( i_{0\rightarrow 1} \right)$ and $\mathbf{x}_s\left( i_{1\rightarrow 0} \right)$ to obtain a new solution $\mathbf{x}_{s+1}$, 
\begin{equation}\label{eq:permute}
\mathbf{x}_{s+1}\left( i_{0\rightarrow 1} \right) =1,~
\mathbf{x}_{s+1}\left( i_{1\rightarrow 0} \right) =0.
\end{equation}
Eqs. \eqref{eq:i-0to1}-\eqref{eq:permute} will be repeated until the objective distance $d_s$ no longer decreases. Finally, we can construct
\begin{equation}\label{eq:C_sel}
\mathcal{C}^m_{\mathrm{sel}}\triangleq\{\mathrm{device}~i|\mathbf{x}^*(i)=1, \mathbf{x}^*=\mathbf{x}_{s-1}, \forall i\in[1,K^m-L_{\mathrm{rnd}}] \},
\end{equation}
and obtain the set of $L$ selected devices $\mathcal{C}^m_t=\mathcal{C}^m_{\mathrm{rnd}}\cup\mathcal{C}^m_{\mathrm{sel}}$.

\SAMPLERNAME~has a complexity of $\mathcal{O}(F^3+\alpha F^2+\alpha^2 F\tau+L)$, where $\alpha=K^m-L_{\mathrm{rnd}}$ and $\tau$ is the number of \SAMPLERNAME~iterations.
In our experiment, \SAMPLERNAME~can obtain a very desirable solution close to the optimal and has a high execution efficiency comparable to the random sampling approach.
\section{Performance Analysis}
In this section, we analyze the optimality gap and convergence rate of \NAME, and qualitatively compare them with those of FedAvg in the presence of non-i.i.d. data. Then, we give the condition under which \NAME~is time efficient.

\subsection{Convergence Analysis}
\begin{assumption}
The local function $\mathcal{L}$ are $\mu$-strongly convex, $\beta$-smooth, and $\rho$-Lipschitz. 
\end{assumption}

\begin{proposition}
For the gradient $\boldsymbol{g}_{t}^{m}$ on FL node $m$ in the federated setting and the gradient $\boldsymbol{g}_{t}^{c}$ in the centralized setting, we have an upper bound $\delta^m$ of the gradient divergence $\left\| \left( \mathcal{P}^m_t \right) ^T\cdot \boldsymbol{g}_{t}^{m}-\left( \mathcal{P}^{\mathrm{real}} \right) ^T\cdot \boldsymbol{g}_{t}^{c} \right\| $ proportional to the distribution divergence $\left\| \mathcal{P}^m_t-\mathcal{P}^{\mathrm{real}} \right\| $.
\end{proposition}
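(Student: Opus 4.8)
The plan is to exploit the label-distribution-skew structure behind the non-i.i.d. assumption, under which each setting's aggregate gradient is a probability-weighted average of \emph{identical} per-class gradients, so that the gradient divergence collapses onto the distribution mismatch alone. First I would expose the class structure hidden in $\boldsymbol{g}_t^m$ and $\boldsymbol{g}_t^c$. Writing $\boldsymbol{g}_t(f)\triangleq\mathbb{E}_{x\mid y=f}\!\left[\nabla_\omega\mathcal{L}(\omega_t,(x,f))\right]$ for the expected gradient restricted to class $f$ at the synchronized model $\omega_t$, the aggregate gradient under any label distribution $\mathcal{P}$ is $\mathcal{P}^T\boldsymbol{g}_t=\sum_{f=1}^{F}\mathcal{P}(f)\,\boldsymbol{g}_t(f)$, i.e.\ $\boldsymbol{g}_t^m$ and $\boldsymbol{g}_t^c$ are both the $F\times d$ matrix ($d$ the model dimension) stacking these per-class gradients. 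Because only the label marginals differ across devices while the class-conditional feature law $\mathcal{P}(x\mid y)$ is shared, and both gradients are evaluated at the same synchronized parameters, the per-class stacks coincide: $\boldsymbol{g}_t^m=\boldsymbol{g}_t^c=\boldsymbol{g}_t$.

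With this identity in hand, the gradient divergence factorizes cleanly:
\begin{equation}
\left\| (\mathcal{P}^m_t)^T\boldsymbol{g}_t^m-(\mathcal{P}^{\mathrm{real}})^T\boldsymbol{g}_t^c \right\|
=\left\| (\mathcal{P}^m_t-\mathcal{P}^{\mathrm{real}})^T\boldsymbol{g}_t \right\|.
\end{equation}
I would then apply the Cauchy--Schwarz inequality (equivalently, sub-multiplicativity of the spectral norm) to split the right-hand side as $\|(\mathcal{P}^m_t-\mathcal{P}^{\mathrm{real}})^T\boldsymbol{g}_t\|\le\|\mathcal{P}^m_t-\mathcal{P}^{\mathrm{real}}\|\,\|\boldsymbol{g}_t\|$, and invoke the $\rho$-Lipschitz assumption, which bounds every per-class gradient by $\|\boldsymbol{g}_t(f)\|\le\rho$ and hence $\|\boldsymbol{g}_t\|\le\sqrt{F}\,\rho$. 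This yields the claimed bound $\delta^m=\sqrt{F}\,\rho\,\|\mathcal{P}^m_t-\mathcal{P}^{\mathrm{real}}\|$, which is exactly proportional to the distribution divergence and vanishes once \SAMPLERNAME~drives $\mathcal{P}^m_t$ toward $\mathcal{P}^{\mathrm{real}}$.

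The hard part is justifying the identity $\boldsymbol{g}_t^m=\boldsymbol{g}_t^c$, since everything downstream is routine. It rests on two modeling choices: that the heterogeneity is pure label skew (shared $\mathcal{P}(x\mid y)$), and that both gradients are taken at the same iterate. The second is legitimate within a single one-step synchronization iteration, where \NAME~re-synchronizes every node before the local step so that $\omega_t^m=\omega_t$. If instead one allows the node-$m$ model to drift from the centralized iterate, the factorization fails and a correction must be carried; the natural remedy is to add and subtract $(\mathcal{P}^m_t)^T\boldsymbol{g}_t^c$, bound the extra term by $\beta$-smoothness via $\|\boldsymbol{g}_t^m(f)-\boldsymbol{g}_t^c(f)\|\le\beta\|\omega_t^m-\omega_t\|$, and observe that this parameter-divergence term is zero under one-step synchronization, recovering the stated bound.
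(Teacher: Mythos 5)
Your proposal is correct, and it actually goes further than the paper does. The paper's own proof is exactly your ``fallback'' route and nothing more: it adds and subtracts $\left( \mathcal{P}^m_t \right)^T \cdot \boldsymbol{g}_t^c$, applies the triangle inequality to get $\left\| \left( \mathcal{P}^m_t \right)^T \cdot \left( \boldsymbol{g}_t^m - \boldsymbol{g}_t^c \right) \right\| + \left\| \left( \mathcal{P}^m_t - \mathcal{P}^{\mathrm{real}} \right)^T \cdot \boldsymbol{g}_t^c \right\|$, and simply \emph{defines} this sum to be $\delta^m$, leaving the first term (which measures gradient drift at a fixed distribution, not distribution divergence) entirely uncontrolled; the claimed ``proportionality'' is then only argued informally in the surrounding text. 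You instead isolate the modeling assumptions --- pure label skew with a shared class-conditional law $\mathcal{P}(x \mid y)$, and evaluation at the same synchronized iterate --- under which that first term vanishes identically, and you extract an explicit constant $\sqrt{F}\rho$ via Cauchy--Schwarz and the $\rho$-Lipschitz assumption, which is what a literal reading of ``proportional to $\left\| \mathcal{P}^m_t - \mathcal{P}^{\mathrm{real}} \right\|$'' actually requires. The trade-off is that your stronger conclusion leans on assumptions the paper uses implicitly (its whole framing is class-distribution skew, and one-step synchronization does keep iterates aligned within a round) but never states as formal hypotheses, whereas the paper's looser bound holds unconditionally at the price of not really proving the proportionality claim. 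Your remark that $\beta$-smoothness controls the residual term when iterates drift is the right way to reconcile the two arguments.
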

\begin{proof}
\begin{align}
\nonumber
&\left\| \left( \mathcal{P}^m_t \right) ^T\cdot \boldsymbol{g}_{t}^{m}-\left( \mathcal{P}^{\mathrm{real}} \right) ^T\cdot \boldsymbol{g}_{t}^{c} \right\| 
\\
\nonumber
=&\left\| \begin{array}{c}
	\left( \mathcal{P}^m_t \right) ^T\cdot \boldsymbol{g}_{t}^{m}-\left( \mathcal{P}^m_t \right) ^T\cdot \boldsymbol{g}_{t}^{c}\\
	+\left( \mathcal{P}^m_t \right) ^T\cdot \boldsymbol{g}_{t}^{c}-\left( \mathcal{P}^{\mathrm{real}} \right) ^T\cdot \boldsymbol{g}_{t}^{c}\\
\end{array} \right\| 
\\
\nonumber
\le& \left\| \left( \mathcal{P}^m_t \right) ^T\cdot \left( \boldsymbol{g}_{t}^{m}-\boldsymbol{g}_{t}^{c} \right) \right\| +\left\| \left( \mathcal{P}^m_t-\mathcal{P}^{\mathrm{real}} \right) ^T\cdot \boldsymbol{g}_{t}^{c} \right\| =\delta ^m.
\end{align}
\end{proof}

It is easy to know that $\delta^m$ captures the impact of divergence in class distributions $\left\| \mathcal{P}^m_t-\mathcal{P}^{\mathrm{real}} \right\| $. Generally speaking, the smaller the distribution divergence, the smaller the upper bound of the gradient divergence $\delta^m$. 

\begin{proposition}\label{prop:conv-bound}
Let $\delta \triangleq \mathbb{E}\left[ \delta ^m \right]$, the convergence upper bound of \NAME~is $\mathcal{O}( \frac{1}{R(T-\delta h(T))} ) $
, and its optimality gap is bounded by $\mathcal{O}( \frac{1}{TR}+\delta h(T)+o(\sqrt{\frac{\delta h(T)}{T}}) ) $.
\end{proposition}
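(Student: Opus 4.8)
The plan is to analyze \NAME~at the super-node level as an instance of FedAvg with $M$ homogeneous clients --- the super nodes $\mathcal{C}^m_t$ --- each running $T$ internal mini-batch SGD steps before one external aggregation, and to adapt the strongly-convex FedAvg convergence machinery of Li \textit{et al.}~\cite{li2019convergence} with the usual heterogeneity term replaced by the gradient-divergence quantity $\delta$. The first step is to note that the internal single-step synchronization (Eq.~\eqref{eq:internal-aggr}) renders each super node equivalent to centralized SGD on its aggregated mini-batch, so the only surviving source of inter-client drift is the residual divergence between each super node's class distribution $\mathcal{P}^m_t$ and $\mathcal{P}^{\mathrm{real}}$. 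The preceding proposition bounds the corresponding gradient divergence by $\delta^m$, and the $\rho$-Lipschitz assumption makes $\delta^m$ genuinely proportional to $\|\mathcal{P}^m_t-\mathcal{P}^{\mathrm{real}}\|$; taking expectations over the \SAMPLERNAME~pre-sampling randomness replaces $\delta^m$ by $\delta=\mathbb{E}[\delta^m]$.

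Next I would set up the one-round descent recursion. Using $\beta$-smoothness and $\mu$-strong convexity, I would track the expected distance $\mathbb{E}\|\omega_t-\omega^*\|^2$ (equivalently the function-value gap) across the $T$ internal iterations followed by the external aggregation of Eq.~\eqref{eq:external-aggr}. The standard bound gives a per-step contraction factor together with an additive perturbation from the stochastic-gradient variance and from the gradient divergence $\delta$. The crucial, paper-specific step is to accumulate this divergence across the $T$ uncorrected local steps: since the external model is refreshed only once every $T$ iterations, the per-round heterogeneity penalty grows with the number of local steps, and I would collect this growth into the function $h(T)$ so that the net per-round heterogeneity contribution is $\delta\,h(T)$. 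Intuitively, $h(T)$ measures how far the trajectories on the homogeneous-but-not-identical super nodes drift apart before re-synchronization.

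With the per-round recursion established, I would unroll it over the $R$ rounds under a decaying learning rate $\eta_t=\mathcal{O}(1/t)$, as required for strongly-convex FedAvg convergence. Because the penalty $\delta\,h(T)$ offsets part of the useful descent produced by the $T$ local steps, the effective progress per round behaves like $T-\delta\,h(T)$ rather than $T$; carrying this multiplicative reduction through the telescoping sum yields the convergence rate $\mathcal{O}(\frac{1}{R(T-\delta h(T))})$, which collapses to the classical $\mathcal{O}(\frac{1}{TR})$ rate when $\delta=0$. The optimality gap then requires locating the fixed point of the recursion rather than just its rate: I would separate the vanishing optimization error $\mathcal{O}(\frac{1}{TR})$, the non-vanishing bias $\mathcal{O}(\delta\,h(T))$ inherited from the residual distribution divergence, and a cross term between the stochastic noise and this bias, which a perturbation expansion of the fixed point around $\delta=0$ shows to be of order $o(\sqrt{\frac{\delta h(T)}{T}})$.

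The hard part will be justifying the effective-iteration form $T-\delta\,h(T)$ in the denominator --- rather than a purely additive heterogeneity error --- and pinning down the precise shape of $h(T)$. This demands carefully tracking how the accumulated gradient divergence over the $T$ uncorrected steps enters the contraction recursion multiplicatively, and then expanding the resulting fixed point around the homogeneous case to extract the lower-order $o(\sqrt{\frac{\delta h(T)}{T}})$ correction. A secondary subtlety is that \SAMPLERNAME~re-randomizes the super-node distributions $\mathcal{P}^m_t$ every iteration through the pre-sampling trick, so all divergence quantities must be controlled in expectation, which is precisely why the statement is phrased in terms of $\delta=\mathbb{E}[\delta^m]$ rather than any per-node value.
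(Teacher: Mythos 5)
Your opening reduction is the same as the paper's: \NAME\ is viewed as FedAvg run over $M$ super nodes, each performing $T$ local mini-batch SGD steps with batch size $nL$ between external aggregations, with the residual heterogeneity quantified by the gradient-divergence bound $\delta$ from the preceding proposition. From there, however, the paper does not re-derive a recursion at all: it simply invokes Lemma 2 of Wang \textit{et al.}\cite{wang2019adaptive}, in which $h(T)\triangleq\frac{1}{\beta}\left(\left(\eta\beta+1\right)^T-1\right)-\eta T$ is \emph{defined} in closed form as the bound on the drift between the federated trajectory and an auxiliary centralized SGD trajectory over $T$ uncorrected steps, under a \emph{fixed} learning rate $\eta\le\frac{1}{\beta}$. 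The denominator form $\eta\varphi-\frac{\rho\delta h(T)}{T\varepsilon^2}$ and the quadratic-root optimality gap (loosened via $\sqrt{a+b}\le\sqrt{a}+\sqrt{b}$ to give the $\sqrt{\rho\delta h(T)/(\eta\varphi T)}$ cross term) both come directly from that reference.

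The gap in your plan is precisely the step you yourself flag as ``the hard part.'' By routing the argument through the strongly-convex analysis of Li \textit{et al.}\cite{li2019convergence} with a decaying stepsize $\eta_t=\mathcal{O}(1/t)$ and a contraction recursion on $\mathbb{E}\|\omega_t-\omega^*\|^2$, you land in a framework whose heterogeneity penalty enters \emph{additively} in the numerator of an $\mathcal{O}(1/TR)$ bound; it will not produce the multiplicative ``effective iterations'' form $T-\delta h(T)$ in the denominator, nor the specific exponential shape of $h(T)$, both of which are artifacts of the fixed-stepsize, trajectory-comparison analysis of \cite{wang2019adaptive}. Your decaying learning rate is also inconsistent with $h(T)$ depending on a single constant $\eta$. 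Similarly, the $o\bigl(\sqrt{\delta h(T)/T}\bigr)$ term is not obtained by a perturbation expansion of a fixed point around $\delta=0$; it is the explicit square-root cross term in Wang \textit{et al.}'s gap bound. So the proposal has the right reduction but the wrong key lemma, and the stated bound would not follow from the machinery you chose.
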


\begin{proof}
As mentioned above, the convergence performance of \NAME~is theoretically equivalent to that of FedAvg, in which $M$ FL super nodes run mini-batch SGD with batch size $nL$ for $T$ local iterations in each round. Then, the convergence upper bound of \NAME~after $TR$ iterations can be inferred from Lemma 2 in \cite{wang2019adaptive},
$$\mathcal{L}\left( \omega _{TR} \right) -\mathcal{L}\left( \omega ^* \right) \le \frac{1}{TR\left( \eta \varphi -\frac{\rho \delta h\left( T \right)}{T\varepsilon ^2} \right)},$$
where $h\left( T \right) \triangleq \frac{1}{\beta}( \left( \eta \beta +1 \right) ^T-1 ) -\eta T$. When $\eta \le \frac{1}{\beta}$, the optimality gap $G=\mathcal{L}( \omega ^{(f)} ) -\mathcal{L}\left( \omega ^* \right)$ is bounded by
\begin{align}
\nonumber
G&\le \frac{1}{2\eta \varphi TR}+\rho \delta h\left( T \right) +\sqrt{\frac{1}{4\eta ^2\varphi ^2T^2R^2}+\frac{\rho \delta h\left( T \right)}{\eta \varphi T}} \\
\nonumber
&\le \frac{1}{\eta \varphi TR}+\rho \delta h\left( T \right)+\sqrt{\frac{\rho\delta h(T)}{\eta\varphi T}}.
\end{align}
\end{proof}

Since \SAMPLERNAME~forces FL super nodes to have aligned class distributions, \NAME~has an upper bound of the gradient divergence smaller than FedAvg $\delta_{\mathrm{\NAME}}<\delta_{\mathrm{FedAvg}}$. Therefore, it is easy to infer from Proposition \ref{prop:conv-bound} that, \NAME~has both the convergence upper bound and the optimality gap smaller than those of FedAvg, thus it can improve the FL convergence speed and accuracy performance.

\subsection{Time-Efficiency Condition}
We analyze the time cost of \NAME~in each round ($T$ one-step synchronization iterations) and that of FedAvg in each round ($T$ local iterations), and give the condition under which \NAME~can achieve higher time efficiency than FedAvg.

\textbf{Time Cost of \NAME.}
The time cost of \NAME~$T_{\mathrm{\NAME}}$ is determined by communication, computation and client selection. The communication delays are brought by internal synchronizations $T_{\mathrm{comm}}^{\mathrm{int}}$ and external synchronizations $T_{\mathrm{comm}}^{\mathrm{ext}}$. For the internal synchronization, the delay of uploading local models of size $S$ from $L$ devices to their BS is  $\frac{SL}{B_{\mathrm{up}}^{\mathrm{int}}\log _2\left( 1+\gamma _{\mathrm{BS}} \right)}$, and the delay of synchronizing the model of size $S$ from the BS to $L$ devices is $\frac{SL}{B_{\mathrm{down}}^{\mathrm{int}}\log _2\left( 1+\gamma _{\mathrm{device}} \right)}$, where $B_{\mathrm{up}}^{\mathrm{int}}$ and $B_{\mathrm{down}}^{\mathrm{int}}$ are the uplink and downlink bandwidths between devices and BS, $\gamma _{\mathrm{BS}}$ and $\gamma _{\mathrm{device}}$ are the received signal-to-noise ratio (SNR) of BS and devices. For the external synchronization, the delay of uploading models of size $S$ from $M$ BSs to the top server is $\frac{SM}{B_{\mathrm{up}}^{\mathrm{ext}}\log _2\left( 1+\gamma _{\mathrm{top}} \right)}$, and the delay of synchronizing the global model of size $S$ from the top server to $M$ BSs is $\frac{SM}{B_{\mathrm{down}}^{\mathrm{ext}}\log _2\left( 1+\gamma _{\mathrm{BS}} \right)}$, where $B_{\mathrm{up}}^{\mathrm{ext}}$ and $B_{\mathrm{down}}^{\mathrm{ext}}$ are the uplink and downlink bandwidths between BSs and the top server, $\gamma _{\mathrm{top}}$ is the SNR of the top server. Hence, we have the communication time cost for each internal synchronization and each external synchronization as follows,
\begin{align}
\label{eq:t_ext_comm}
T_{\mathrm{comm}}^{\mathrm{ext}}&=\frac{SM}{B_{\mathrm{up}}^{\mathrm{ext}}\log _2\left( 1+\gamma _{\mathrm{top}} \right)}\,\,+\frac{SM}{B_{\mathrm{down}}^{\mathrm{ext}}\log _2\left( 1+\gamma _{\mathrm{BS}} \right)}, \\
\label{eq:t_int_comm}
T_{\mathrm{comm}}^{\mathrm{int}}&=\frac{SL}{B_{\mathrm{up}}^{\mathrm{int}}\log _2\left( 1+\gamma _{\mathrm{BS}} \right)}+\frac{SL}{B_{\mathrm{down}}^{\mathrm{int}}\log _2\left( 1+\gamma _{\mathrm{device}} \right)}.
\end{align}

Let the delay of each local update be $T_{\mathrm{comp}}$ and the delay of client selection procedure be $T_{\mathrm{select}}$. Each round the internal synchronization is performed $T$ times, the total delay is
\begin{equation}
\label{eq:t_fedgs}
T_{\mathrm{\NAME}}=T_{\mathrm{comm}}^{\mathrm{ext}}+T\cdot\left( T_{\mathrm{select}}+T_{\mathrm{comm}}^{\mathrm{int}}+T_{\mathrm{comp}} \right). 
\end{equation}

\textbf{Time Cost of FedAvg.} The time cost of FedAvg $T_\mathrm{FedAvg}$ is mainly determined by communication and computation because the client selection procedure is simple random sampling so that the selection delay is negligible. The communication delays come from the uplink and downlink model transmission between the top server and devices. The delay of uploading local models of size $S$ from $ML$ devices to the top server is $\frac{SML}{B_{\mathrm{up}}^{\mathrm{ext}}\log _2\left( 1+\gamma _{\mathrm{top}} \right)}$, and the delay of synchronizing the global model of size $S$ from the top server to $ML$ devices is $\frac{SML}{B_{\mathrm{down}}^{\mathrm{ext}}\log _2\left( 1+\gamma _{\mathrm{device}} \right)}$. Hence, we have the communication time cost for each round of synchronization as follows,
\begin{equation}
\label{eq:t_ext_comm_fedavg}
\tilde{T}_{\mathrm{comm}}^{\mathrm{ext}}=\frac{SML}{B_{\mathrm{up}}^{\mathrm{ext}}\log _2\left( 1+\gamma _{\mathrm{top}} \right)}+\frac{SML}{B_{\mathrm{down}}^{\mathrm{ext}}\log _2\left( 1+\gamma _{\mathrm{device}} \right)}.
\end{equation}
Then, the total time cost when FedAvg performs $T$ local updates and one round of synchronization is
\begin{equation}
\label{eq:t_fedavg}
T_\mathrm{FedAvg}=\tilde{T}_{\mathrm{comm}}^{\mathrm{ext}}+T\cdot T_{\mathrm{comp}}.
\end{equation}

In order to simplify the analysis result, we make the following assumptions.

\begin{assumption}
\label{assumption:condition}
(a) The uplink and downlink bandwidths are equal: $B_{\mathrm{up}}^{\mathrm{ext}}=B_{\mathrm{down}}^{\mathrm{ext}}=B^{\mathrm{ext}}$, $B_{\mathrm{up}}^{\mathrm{int}}=B_{\mathrm{down}}^{\mathrm{int}}=B^{\mathrm{int}}$; (b) The SNRs of the top server, BSs, and devices are equal: $\gamma _{\mathrm{top}}=\gamma _{\mathrm{BS}}=\gamma _{\mathrm{device}}=\gamma$.
\end{assumption}

\new{Then, we can give the following condition for hyperparameter setting, under which \NAME~can achieve higher time efficiency than FedAvg.
\begin{proposition}\label{prop:condition}
Let Assumption \ref{assumption:condition} hold and $\beta=\log_2(1+\gamma)$, the time cost per $T$ iterations in each round satisfies $T_{\mathrm{\NAME}} < T_{\mathrm{FedAvg}}$ if $\frac{TL}{M\left( L-1 \right)}<\frac{B^{\mathrm{int}}}{B^{\mathrm{ext}}}$, where
\begin{align}
\label{eq:t_fedgs_simplified}
&T_{\mathrm{\NAME}}=\frac{2SM}{\beta B^{\mathrm{ext}}}+T\left( T_{\mathrm{select}} + \frac{2SL}{\beta B^{\mathrm{int}}} + T_{\mathrm{comp}} \right), \\
\label{eq:t_fedavg_simplified}
&T_{\mathrm{FedAvg}}=\frac{2SML}{\beta B^{\mathrm{ext}}}+T T_{\mathrm{comp}}.
\end{align}
\end{proposition}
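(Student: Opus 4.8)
The plan is to reduce the two total-time expressions to the simplified forms stated in the proposition, subtract them, and read off the sign condition. First I would substitute Assumption~\ref{assumption:condition} together with $\beta=\log_2(1+\gamma)$ into the general delay formulas in Eqs.~\eqref{eq:t_ext_comm}, \eqref{eq:t_int_comm}, and \eqref{eq:t_ext_comm_fedavg}. Because part~(a) forces $B_{\mathrm{up}}^{\mathrm{ext}}=B_{\mathrm{down}}^{\mathrm{ext}}=B^{\mathrm{ext}}$ and $B_{\mathrm{up}}^{\mathrm{int}}=B_{\mathrm{down}}^{\mathrm{int}}=B^{\mathrm{int}}$, while part~(b) collapses every $\log_2(1+\gamma_{\cdot})$ factor into the single quantity $\beta$, each communication delay becomes a sum of two identical terms. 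This yields $T_{\mathrm{comm}}^{\mathrm{ext}}=2SM/(\beta B^{\mathrm{ext}})$, $T_{\mathrm{comm}}^{\mathrm{int}}=2SL/(\beta B^{\mathrm{int}})$, and $\tilde{T}_{\mathrm{comm}}^{\mathrm{ext}}=2SML/(\beta B^{\mathrm{ext}})$; substituting these back into Eqs.~\eqref{eq:t_fedgs} and \eqref{eq:t_fedavg} reproduces exactly Eqs.~\eqref{eq:t_fedgs_simplified} and \eqref{eq:t_fedavg_simplified}.

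Next I would form the difference $T_{\mathrm{\NAME}}-T_{\mathrm{FedAvg}}$. The per-round computation terms $T\,T_{\mathrm{comp}}$ are identical and cancel, and the two external-synchronization terms combine into $\frac{2SM}{\beta B^{\mathrm{ext}}}(1-L)=-\frac{2SM(L-1)}{\beta B^{\mathrm{ext}}}$, leaving
\[
T_{\mathrm{\NAME}}-T_{\mathrm{FedAvg}}=T\,T_{\mathrm{select}}+\frac{2STL}{\beta B^{\mathrm{int}}}-\frac{2SM(L-1)}{\beta B^{\mathrm{ext}}}.
\]
The efficiency requirement $T_{\mathrm{\NAME}}<T_{\mathrm{FedAvg}}$ is therefore equivalent to forcing this quantity to be negative.

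The final step is to isolate the bandwidth ratio. Dropping the selection overhead $T\,T_{\mathrm{select}}$ — justified by the near-linear complexity of \SAMPLERNAME~established in Section~\ref{section:client-selection}, which renders $T_{\mathrm{select}}$ negligible against the communication delays — the inequality reduces to $\frac{2STL}{\beta B^{\mathrm{int}}}<\frac{2SM(L-1)}{\beta B^{\mathrm{ext}}}$. Cancelling the common factor $2S/\beta$ and rearranging gives the stated relaxed condition $\frac{TL}{M(L-1)}<\frac{B^{\mathrm{int}}}{B^{\mathrm{ext}}}$. The main subtlety — and the reason the condition is only ``relaxed'' rather than exact — lies precisely in the treatment of $T_{\mathrm{select}}$: retaining it would yield the sharper but less interpretable requirement $T\,T_{\mathrm{select}}+\frac{2STL}{\beta B^{\mathrm{int}}}<\frac{2SM(L-1)}{\beta B^{\mathrm{ext}}}$, so I would either state explicitly that $T_{\mathrm{select}}\to 0$, or, if a fully rigorous sufficient condition is wanted, absorb an upper bound on $T_{\mathrm{select}}$ into the right-hand side before rearranging. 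I expect this to be the only genuine obstacle, as the remaining manipulations are routine algebra.
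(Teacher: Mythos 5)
Your proposal is correct and follows essentially the same route as the paper: substitute Assumption \ref{assumption:condition} into Eqs. \eqref{eq:t_ext_comm}--\eqref{eq:t_fedavg} to obtain the simplified costs, form the difference, and drop the $T\,T_{\mathrm{select}}$ term to isolate the bandwidth-ratio condition. The only cosmetic difference is the justification for neglecting $T_{\mathrm{select}}$ — the paper appeals to the measured 15\,ms execution time of \SAMPLERNAME~rather than its asymptotic complexity — and your closing remark about the retained-$T_{\mathrm{select}}$ inequality matches the intermediate bound the paper derives before making that approximation.
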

}

\begin{proof} Eq. \eqref{eq:t_fedgs_simplified} can be obtained by combining Eqs. \eqref{eq:t_ext_comm}-\eqref{eq:t_fedgs}, and Eq. \eqref{eq:t_fedavg_simplified} can be obtained by combining Eqs. \eqref{eq:t_ext_comm_fedavg}-\eqref{eq:t_fedavg}. Let $T_{\mathrm{\NAME}} - T_{\mathrm{FedAvg}} < 0$, we have
\begin{align}
\nonumber
&\frac{2SM\left( 1-L \right)}{\beta B^{\mathrm{ext}}}+\frac{2TSL}{\beta B^{\mathrm{int}}}+T T_{\mathrm{select}}<0 \\
\nonumber
\Rightarrow\quad&\frac{B^{\mathrm{ext}}}{B^{\mathrm{int}}}SL+T_{\mathrm{select}}\cdot \frac{\beta B^{\mathrm{ext}}}{2}<\frac{SM\left( L-1 \right)}{T}.
\end{align}

In our experiment, \SAMPLERNAME~is quite fast, whose time cost (15 milliseconds) is negligible compared to other delays. Therefore, we assume $T_{\mathrm{select}}\approx 0$ to simplify the result and obtain
\begin{equation}
\nonumber
\frac{B^{\mathrm{ext}}}{B^{\mathrm{int}}}SL<\frac{SM\left( L-1 \right)}{T}\Rightarrow \frac{TL}{M\left( L-1 \right)}<\frac{B^{\mathrm{int}}}{B^{\mathrm{ext}}}.
\end{equation}
\end{proof}

\new{In modern industrial applications, 5G enables indoor industrial use cases that were impossible before, supported by high data rate, ultra-low delay, and extreme density of wireless communications\cite{varga20205g}. In reality, the data rate of 5G edge is about 10-100 times of that in WAN, that is, $B^{\mathrm{int}}/B^{\mathrm{ext}}\in [10,100]$. Therefore, we can easily set $T,M,L$ to satisfy the condition in Proposition \ref{prop:condition}, so that the time efficiency of \NAME~can be guaranteed.}
\begin{figure}[t]
\centering
\includegraphics[width=0.3\textwidth]{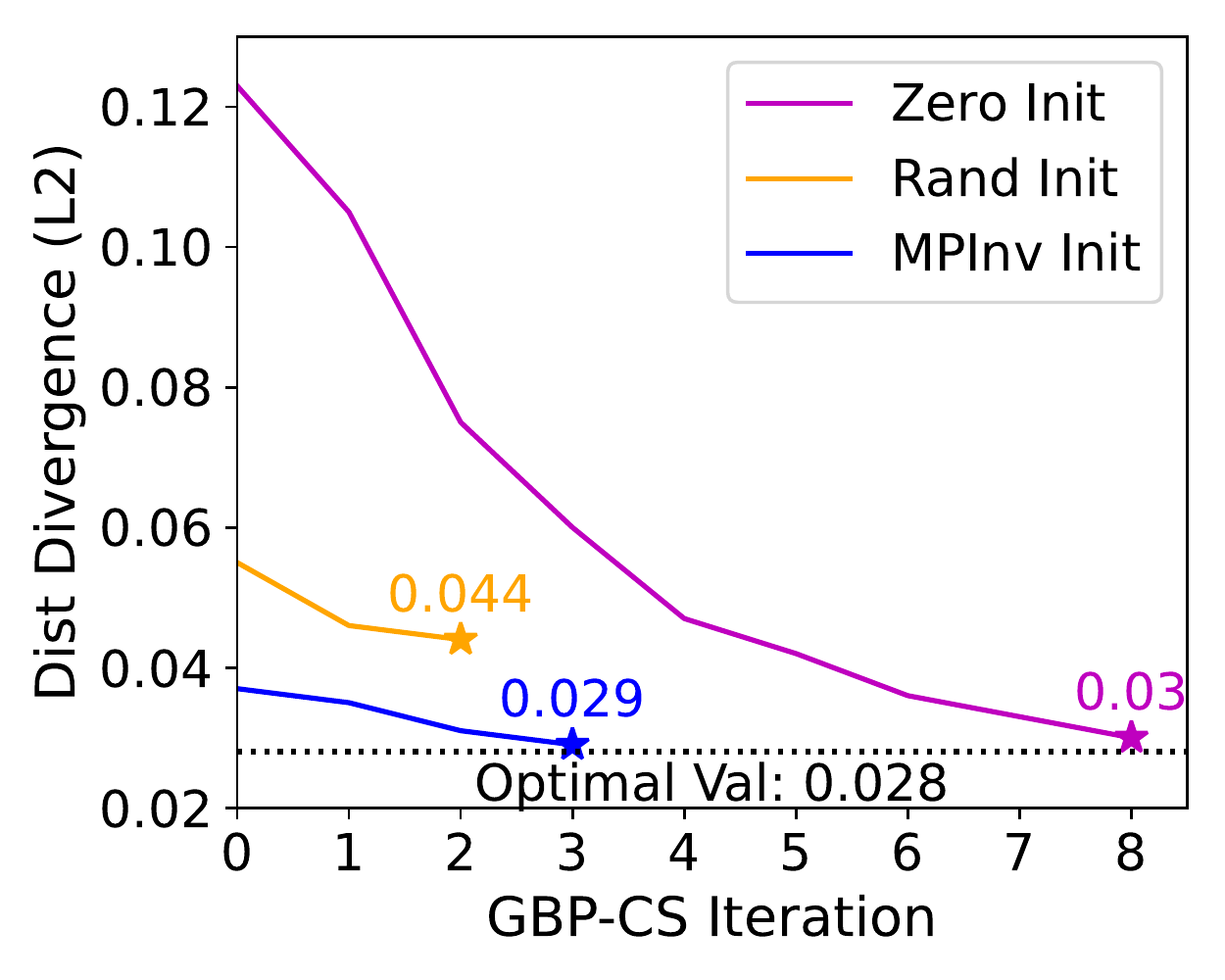}
\caption{Distribution divergence optimization curves of \SAMPLERNAME~with different initializers.}
\label{fig:dist-optim-init-curve}
\end{figure}

\begin{figure*}[t]
\centering
\subfloat[Distribution Divergence]{\includegraphics[width=0.31\textwidth]{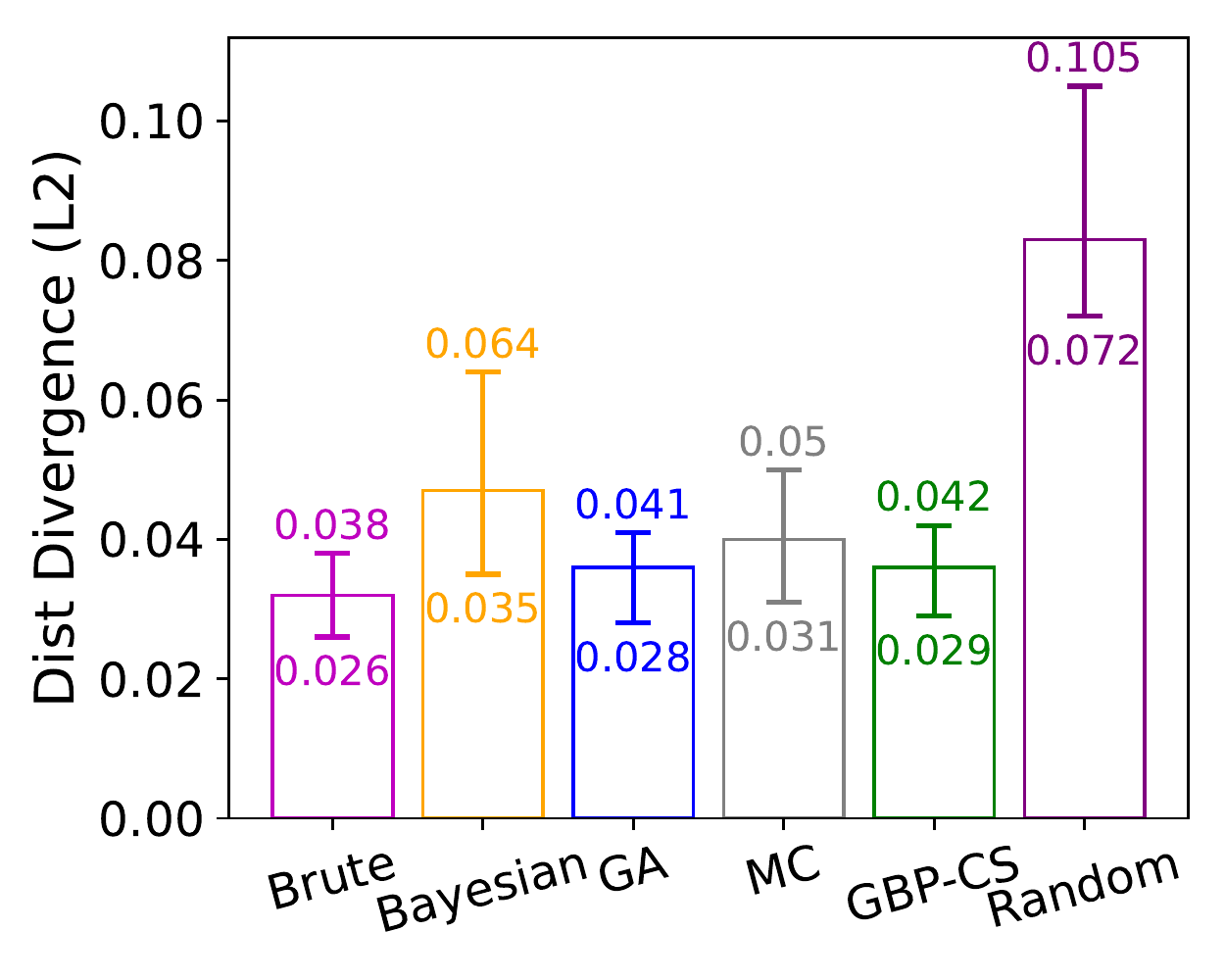}
\label{fig:dist-divergence-comparison}}
\subfloat[Execution Time]{\includegraphics[width=0.33\textwidth]{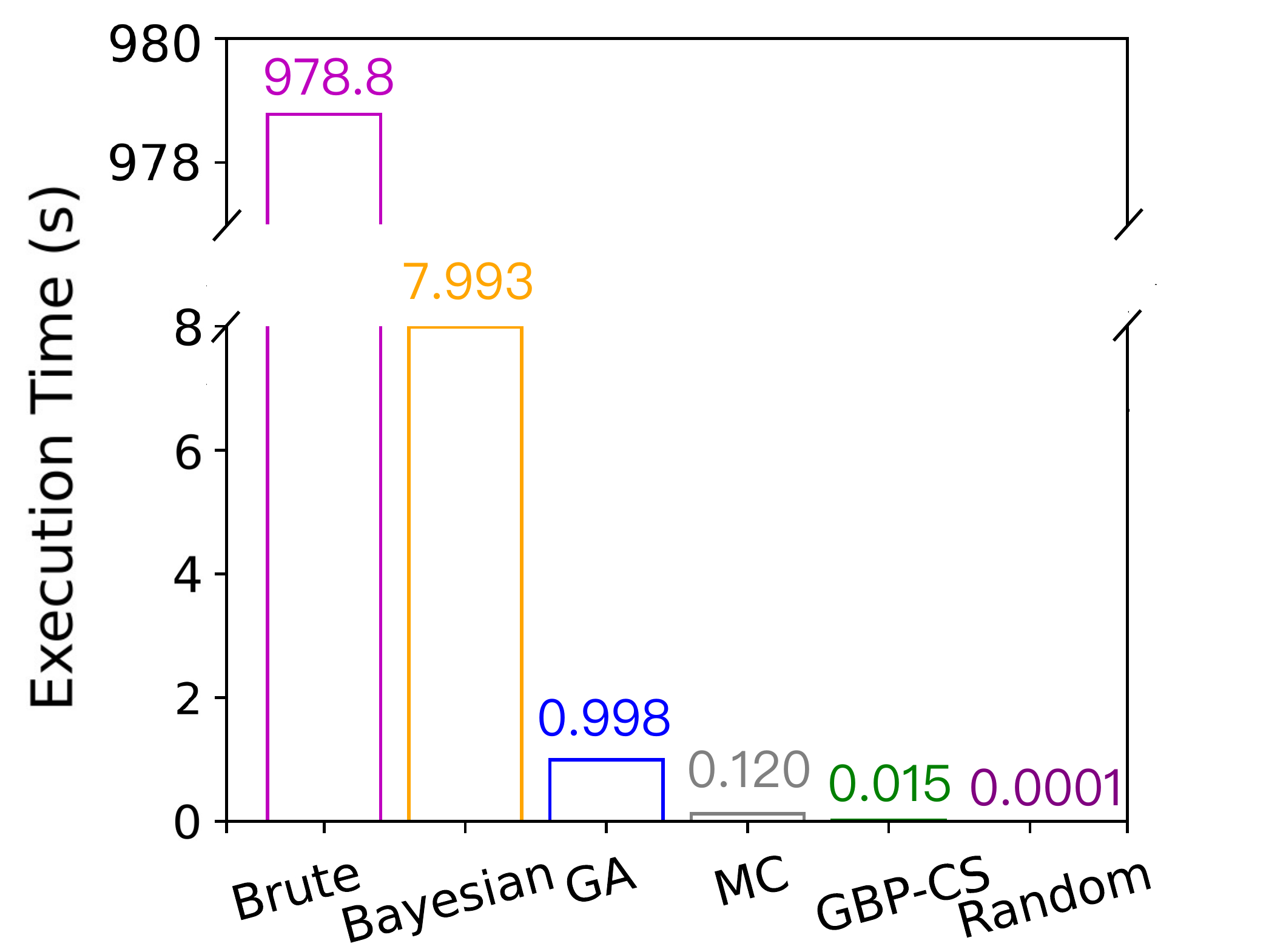}
\label{fig:execution-time-comparison}}
\subfloat[Optimization Curve]{\includegraphics[width=0.325\textwidth]{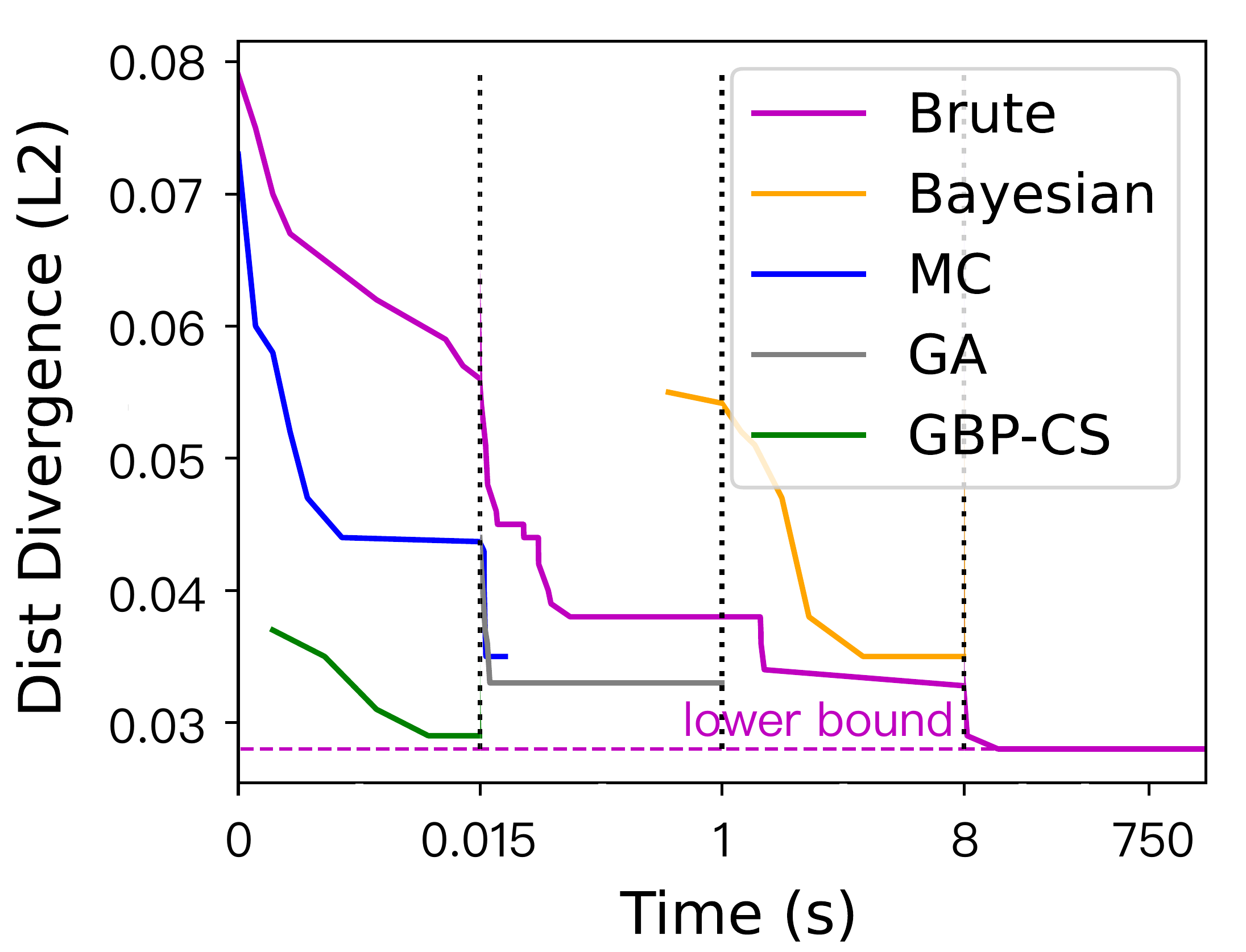}
\label{fig:dist-optim-curve}}
\caption{Comparison of (a) distribution divergence, (b) execution time, and (c) optimization curve among different samplers.}
\label{fig:sampler-comparison}
\end{figure*}

\section{Experimental Evaluation}\label{section:experiment}
\subsection{Experiment Setup}
\textbf{Environment and Hyperparameter Setup.} 
\new{In the experiment, we consider an IIoT application where OCR technology is used in the identification of packing boxes, machines, robots, vehicles, and workers, through recognizing the optical characters on their badges. To this end, we aim to train a high-accuracy OCR model in the federated setting, where sensors' local character images are confidential and skewed in the class distribution. The real-world FEMNIST\cite{caldas2018leaf} dataset is chosen to train our federated OCR model, as it is built by partitioning 805,263 optical digit and character images into 3,550 devices, following non-i.i.d.-like class distributions and uneven data sizes.} Our experiment platform contains $K=350$ OCR cameras and $M=10$ factories, each factory $m$ has  $K^m=35$ OCR cameras (hereinafter referred to as devices). In each iteration, $L=10$ devices are selected from each factory to participate in the federated OCR training. A four-layer convolutional neural network [Conv2D(32), MaxPool, Conv2D(64), MaxPool, Dense(2048), Dense(62)] is used as the training model because it is lightweight and suitable for resource-constrained industrial devices. Unless otherwise specified, we use the standard mini-batch SGD to train local ML models, with the learning rate $\eta=0.01$, the batch size $n=32$, the number of iterations per round $T=50$ and the maximum number of rounds $R=500$.

\textbf{\SAMPLERNAME~Initialization.}
The choice of the initial point $\mathbf{x}_1$ in \SAMPLERNAME~is critical to the quality of the solution, because a bad initial point may cause \SAMPLERNAME~to fall into a local minimum. In the experiment, $L_{\mathrm{rnd}}=2$ devices are pre-sampled at random, and the other $L_{\mathrm{sel}}=L-L_{\mathrm{rnd}}=8$ devices are selected using \SAMPLERNAME, with the following three initialization methods.

\begin{enumerate}
\item \textit{Random Initialization.} Set $L_{\mathrm{sel}}$ values in $\mathbf{x}_1$ to 1 at random and leave other values at 0.
\item \textit{Zero Initialization.} All values in $\mathbf{x}_1$ are first initialized to 0. Then, a warm-up step is performed to meet the vector weight constraint Eq. \eqref{eq:weight-constraint}, in which one value $\mathbf{x}_1\left( i \right)$ with the smallest gradient is set to 1 iteratively until the number of value 1 in $\mathbf{x}_1$ reaches $L_{\mathrm{sel}}$. The warm-up step requires additional $L_{\mathrm{sel}}$ iterations.
\item \textit{Moore-Penrose Inverse Initialization} (MPInv). MPInv is first used to solve the least square solution $\mathbf{\tilde{x}}_1=\mathbf{A}^{-1}\mathbf{y}$ of the unconstrained objective function $\underset{\mathbf{x}}{\min} \left\| \mathbf{Ax}-\mathbf{y} \right\| _{L_2}$. Then, $L_{\mathrm{sel}}$ elements with the largest values in $\mathbf{\tilde{x}}_1$ are set to 1 and others are left at 0 to obtain the initial point $\mathbf{x}_1$.
\end{enumerate}

\textbf{Comparison Algorithms.} 
To highlight the efficiency and effectiveness of the proposed \SAMPLERNAME, we consider the following five benchmark client selection methods for comparison.

\begin{enumerate}
\item \textit{Random Sampler} (Random): From each group, $L_{\mathrm{sel}}$ devices are uniformly and randomly sampled.
\item \textit{Monte Carlo Sampler} (MC): Repeat the random sampler 1000 times and the solution minimizes Eq. \eqref{eq:simple-goal} is used.
\item \textit{Brute Sampler} (Brute): Brutely search for the optimal $L_{\mathrm{sel}}$ devices by traversing all feasible solutions to meet Eqs. \eqref{eq:simple-goal}-\eqref{eq:weight-constraint}.
\item \textit{Bayesian Sampler} (Bayesian): Search for a sub-optimal $L_{\mathrm{sel}}$ devices using Bayesian optimization\cite{nogueira2014bayesian} to meet Eqs. \eqref{eq:simple-goal}-\eqref{eq:weight-constraint}. By default, we set the number of initial points to 5 and exploration iterations to 25.
\item \textit{Genetic Sampler} (GA): Search for a sub-optimal $L_{\mathrm{sel}}$ devices using genetic algorithm\cite{whitley1994genetic} to meet Eqs. \eqref{eq:simple-goal}-\eqref{eq:weight-constraint}, in which the constrained 0-1 vector solutions are regarded as genes and suffer from selection, crossover, mutation and elimination. By default, we set the population size to 100, the mutation probability to 0.001, and the number of generations to 100.
\end{enumerate}

Except for the baseline FedAvg\cite{mcmahan2016communication}, other nine advanced approaches are also experimentally compared with \NAME~in the presence of non-i.i.d. data. They are FedMMD\cite{yao2018two}, FedFusion\cite{yao2019towards}, FedProx\cite{li2018federated}, IDA\cite{yeganeh2020inverse}, CGAU\cite{rieger2020client}, FedAvgM\cite{hsu2019measuring}, and FedAdagrad, FedAdam, FedYogi from \cite{reddi2020adaptive}.

\textbf{Implementation.} 
We implement \NAME~on a standard FL simulator Leaf-MX\footnote{Leaf-MX: \url{https://github.com/Lizonghang/leaf-mx}} (an MXNET\cite{chen2015mxnet} implementation of LEAF\cite{caldas2018leaf}). 
The code implementation is open-available on Github: \url{https://github.com/Lizonghang/fedgs}.

\subsection{Results and Discussion}
\textbf{Comparison of initialization methods in \SAMPLERNAME.} 
The optimization curves of the class distribution divergence of Zero, Random, and MPInv initializers are shown in Fig. \ref{fig:dist-optim-init-curve}. Both Zero and MPInv initializers successfully find high-quality solutions (0.029 and 0.030, respectively) close to the optimal of the brute force search (0.028). Instead, the Random initializer falls into a poor local optimal (0.044). Furthermore, the MPInv initializer is much faster because it does not require an additional warm-up procedure like the Zero initializer. Therefore, \SAMPLERNAME~is default initialized with MPInv initializer.

\begin{figure}[!t]
\centering
\subfloat[Effect of $n$ and $T$]{\includegraphics[width=0.35\textwidth]{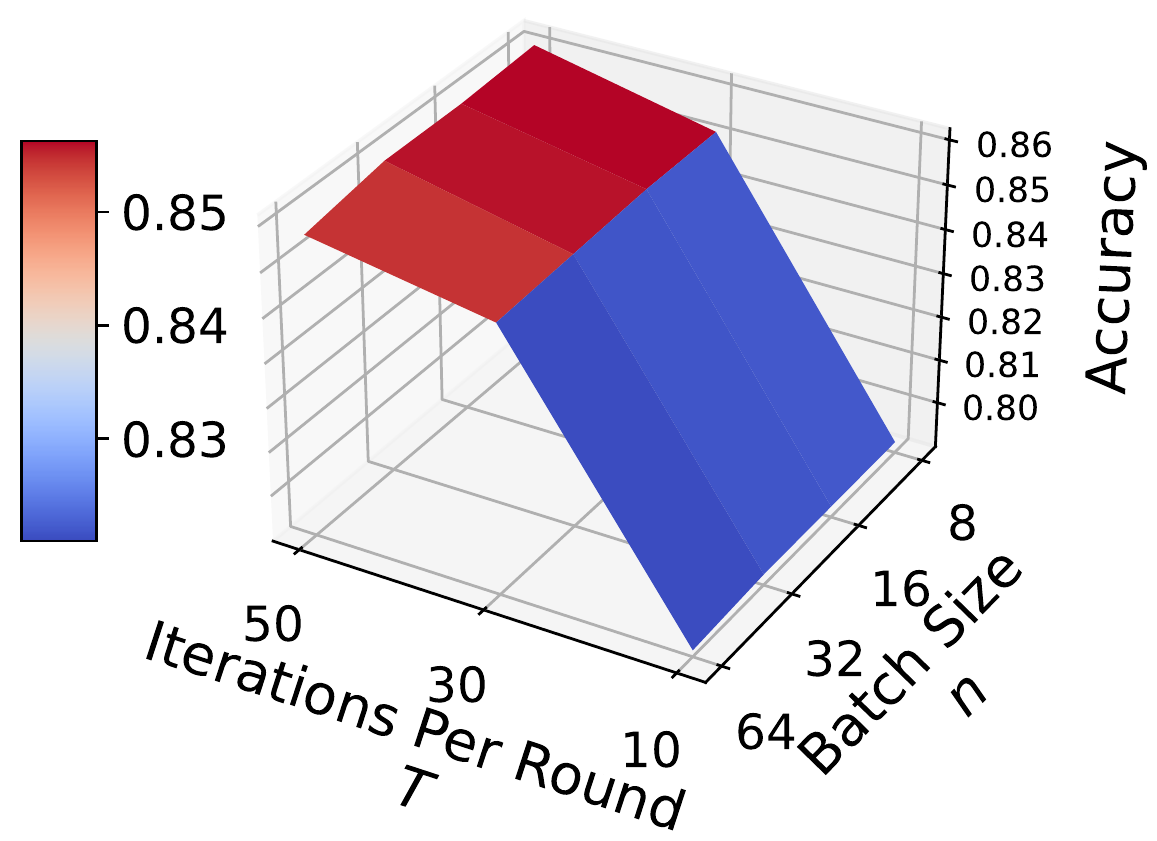}
\label{fig:acc-surface-batch-iter}}
\\
\subfloat[Effect of $M$ and $L$]{\includegraphics[width=0.35\textwidth]{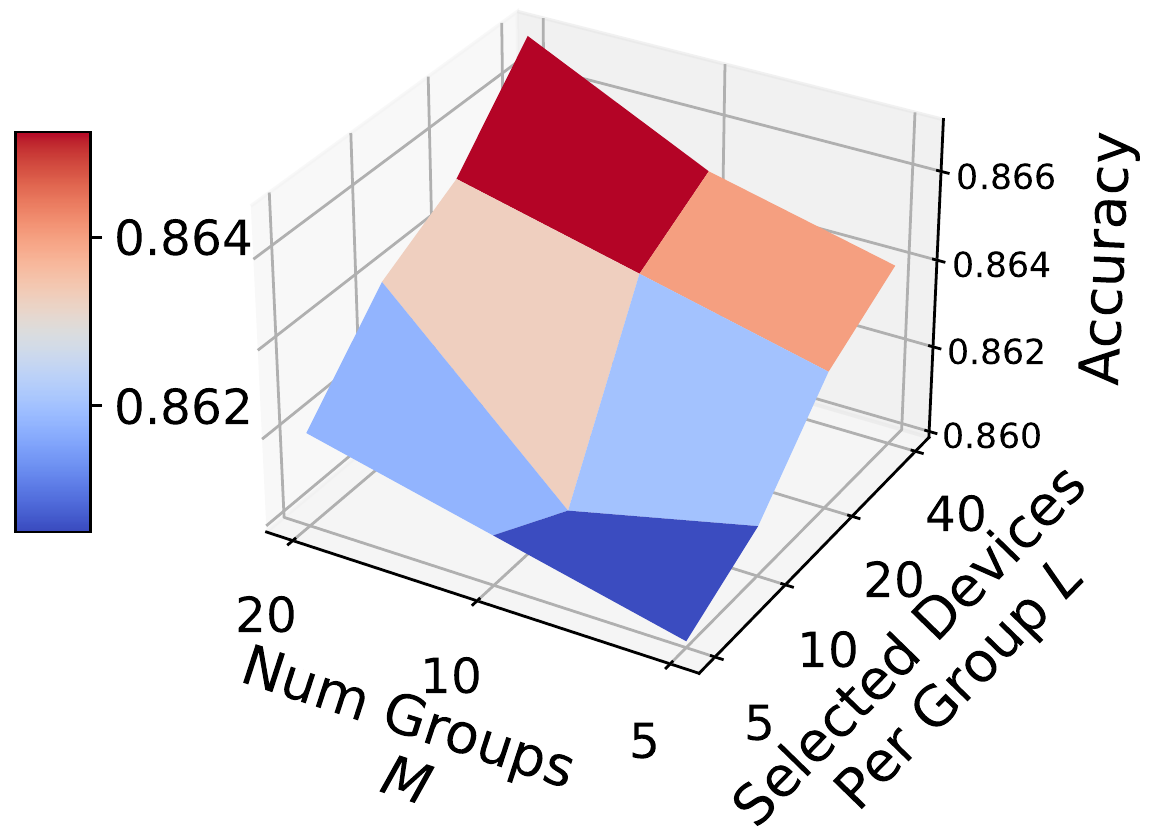}
\label{fig:acc-surface-groups-devices}}
\caption{Accuracy surface of \NAME~over different (a) batch size $n$ and iterations per round $T$; (b) number of groups $M$ and number of selected devices per group $L$.}
\label{fig:acc-surface}
\end{figure}

\textbf{Comparison among \SAMPLERNAME~and other samplers.}
Since the procedure of \SAMPLERNAME~client selection is performed every iteration, both the quality and time cost of the solution are critical to \NAME~performance.

We first compare the distribution divergence (defined as Eq. \eqref{eq:goal}) among \SAMPLERNAME~and other five benchmark samplers. Generally speaking, the smaller the gap between the class distribution $\mathcal{P}^m_t$ of the group $m$ and the global distribution $\mathcal{P}^{\mathrm{real}}$, the smaller the distribution divergence and the better the sampler. The distribution divergence of $M=10$ factories is shown in Fig. \ref{fig:dist-divergence-comparison}. As expected, the most commonly used random sampler in FedAvg leads to a high divergence in class distribution ($0.072\sim0.105$) and causes non-i.i.d. data among groups, while the brute force sampler can always minimize the divergence ($0.026\sim0.038$). The random sampler and the brute force sampler give the upper and lower bounds of the distribution divergence, and solutions of other samplers should locate in this interval, among which GA and \SAMPLERNAME~samplers perform best ($0.028\sim0.041$ and $0.029\sim0.042$, respectively).

In terms of execution time, \NAME~prefers samplers with a very short execution time, because high-frequency client selection may introduce a non-negligible latency to \NAME~and significantly slow down FL training. Fig. \ref{fig:execution-time-comparison} compares the execution time of the above samplers. Let us focus on the brute force, GA, and \SAMPLERNAME~samplers, because their solutions are of the best quality. The brute force sampler requires 979 seconds to find the optimal solution, whose latency is too long to be acceptable. Therefore, \NAME~prefers a sub-optimal solution in an acceptable short time. GA and \SAMPLERNAME~samplers seem to be good choices, and the proposed \SAMPLERNAME~sampler is 66$\times$ faster than the GA sampler, with a negligible 15 milliseconds and a loss of distribution divergence by only 0.001. 

To highlight \SAMPLERNAME~more intuitively, we draw the optimization curve of distribution divergence over execution time in Fig. \ref{fig:dist-optim-curve}. The results show that the proposed \SAMPLERNAME~sampler converges to a high-quality solution 0.029 closest to the optimal 0.028 in the shortest time, demonstrating the superior effectiveness and efficiency of \SAMPLERNAME.

\textbf{Effects of hyperparameters in FedGS.}
Hyperparameters may have great effects on \NAME. To explore these effects, we perform a grid search on experimental hyperparameters, including the batch size $n$, the number of iterations per round $T$, the number of devices selected per group $L$, as well as the environmental hyperparameter, the number of groups $M$. Fig. \ref{fig:acc-surface-batch-iter} visualizes the test accuracy over different $n$ and $T$ settings, where $n$ is chosen from $\{8, 16, 32, 64\}$ and $T$ is chosen from $\{10, 30, 50\}$. The results show that a moderately large $T$ can improve the accuracy of \NAME, while the batch size $n$ has little effect. Fig. \ref{fig:acc-surface-groups-devices} visualizes the test accuracy over different $M$ and $L$ settings, where $M$ is chosen from $\{5, 10, 20\}$ and $L$ is chosen from $\{5, 10, 20, 40\}$. Without loss of generality, both more groups and more selected devices can bring gains in FL model accuracy because more devices' data is included. In this paper, $n=32$, $T=50$ and $L=10$ are used by default to meet the condition in Proposition \ref{prop:condition}. Please note that $M=10$ is determined by the real-world environment instead of an adjustable hyperparameter.

\begin{table}[t]
  \caption{Test accuracy, test loss and convergence speed of \NAME~vs ten federated approaches.}
  \label{table:comparison}
  \centering
  \renewcommand{\arraystretch}{1.3}
  \begin{tabular}{c|c|c|c}
    \hline
    & Test Accuracy & Test Loss & Rounds To 82\% \\
    \hline
    FedAvg (Baseline) & 82.1\% & 0.587 & 478 \\
    \hline
    FedProx & 82.0\% & 0.586 & 497 \\
    \hline
    IDA & 81.0\% & 0.628 & $\times$ \\
    IDA+INTRAC & 81.0\% & 0.618 & $\times$ \\
    IDA+FedAvg & 80.5\% & 0.687 & $\times$ \\
    \hline
    CGAU & 83.3\% & 0.509 & 202 \\
    \hline
    FedMMD & 83.0\% & 0.564 & 378 \\
    \hline
    FedFusion+Conv & 81.7\% & 0.624 & $\times$ \\
    FedFusion+Multi & 82.0\% & 0.591 & 486 \\
    FedFusion+Single & 80.7\% & 0.627 & $\times$ \\
    \hline
    FedAvgM & 84.4\% & 0.820 & \textbf{68} \\
    \hline
    FedAdagrad & 83.8\% & 0.583 & 264 \\
    \hline
    FedAdam & 85.0\% & 0.662 & 71 \\
    \hline
    FedYogi & 84.6\% & 0.590 & 76 \\
    \hline\hline
    \textbf{\NAME} & \textbf{86.0\%} & \textbf{0.435} & 147 \\
    \hline
  \end{tabular}
\end{table}

\textbf{Comparison among \NAME~and other federated approaches.}
We take ten advanced federated approaches for comparison to show the state-of-the-art performance of the proposed \NAME~in the presence of non-i.i.d. data. The test accuracy, test loss, and training rounds required to reach the accuracy of 82\% are listed in Table \ref{table:comparison}, and detailed training curves are given in Fig. \ref{fig:comparison-alg}. Unless otherwise specified, all the comparison approaches use the local epoch $e=5$ by default. In the following, we will compare these approaches and analyze their results, respectively.

\begin{figure*}[!t]
\centering
\subfloat[\NAME~vs FedProx]{\includegraphics[width=0.34\textwidth]{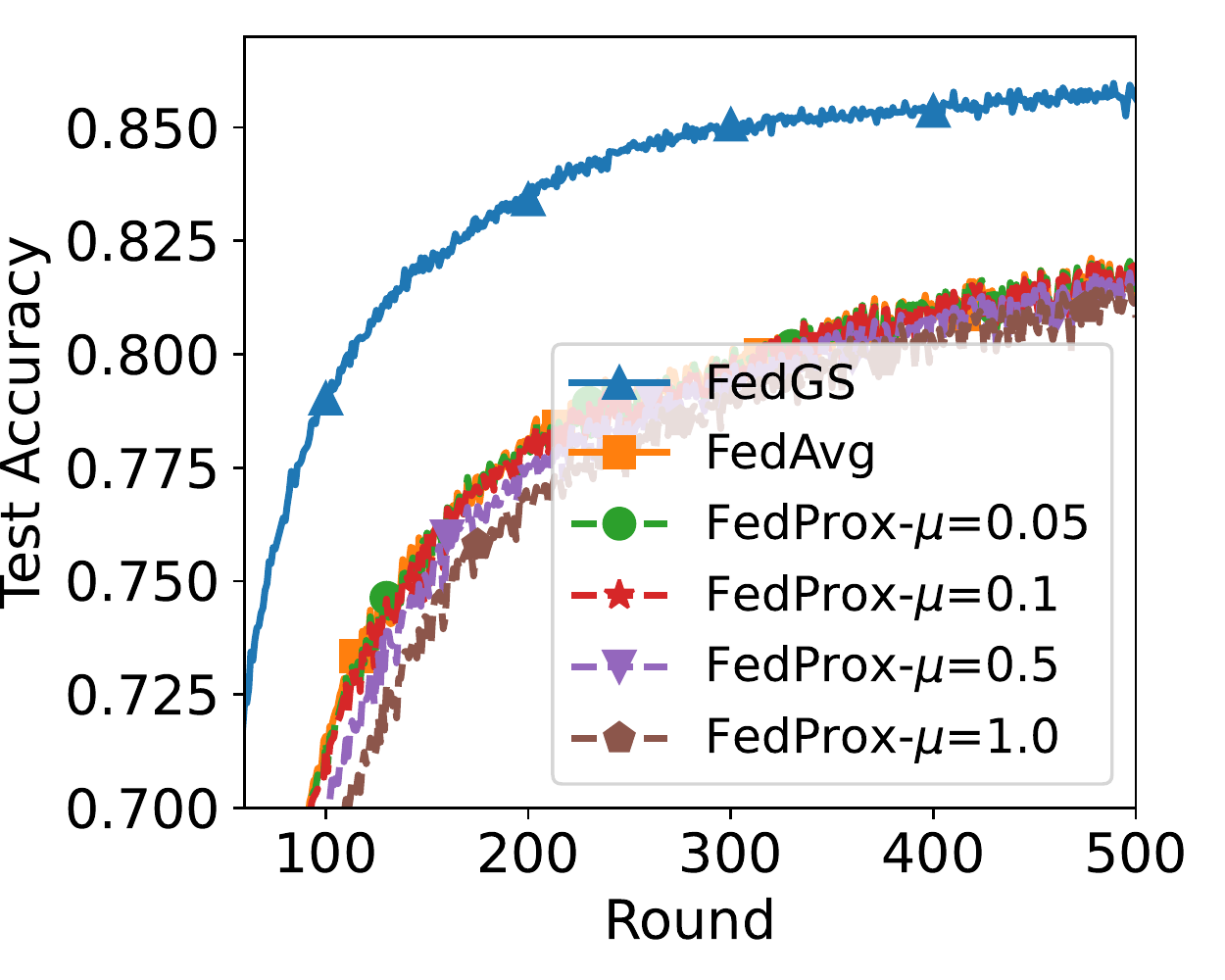}
\label{fig:acc-vs-fedprox}}
\subfloat[\NAME~vs IDA]{\includegraphics[width=0.34\textwidth]{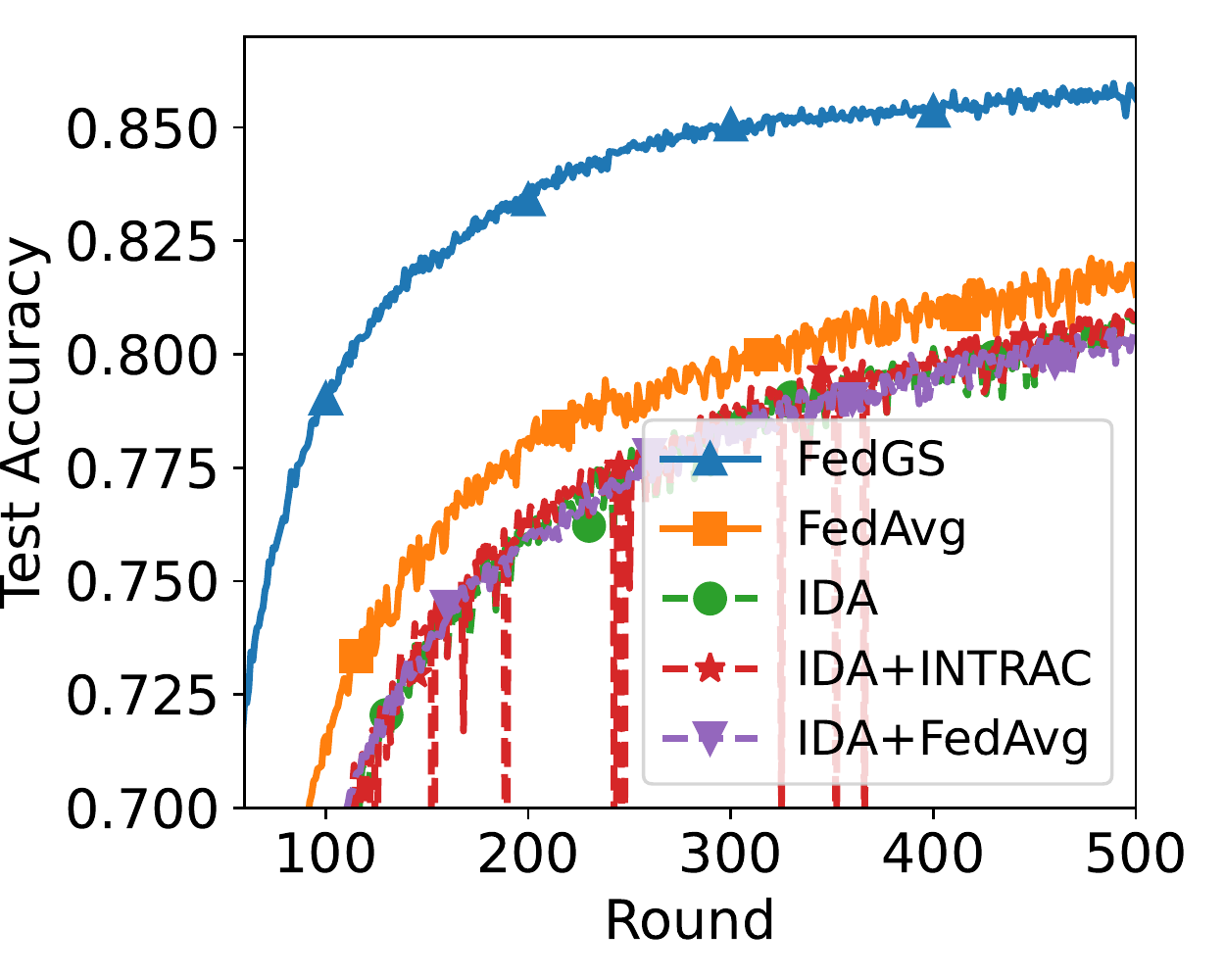}
\label{fig:acc-vs-ida}}
\subfloat[\NAME~vs CGAU]{\includegraphics[width=0.34\textwidth]{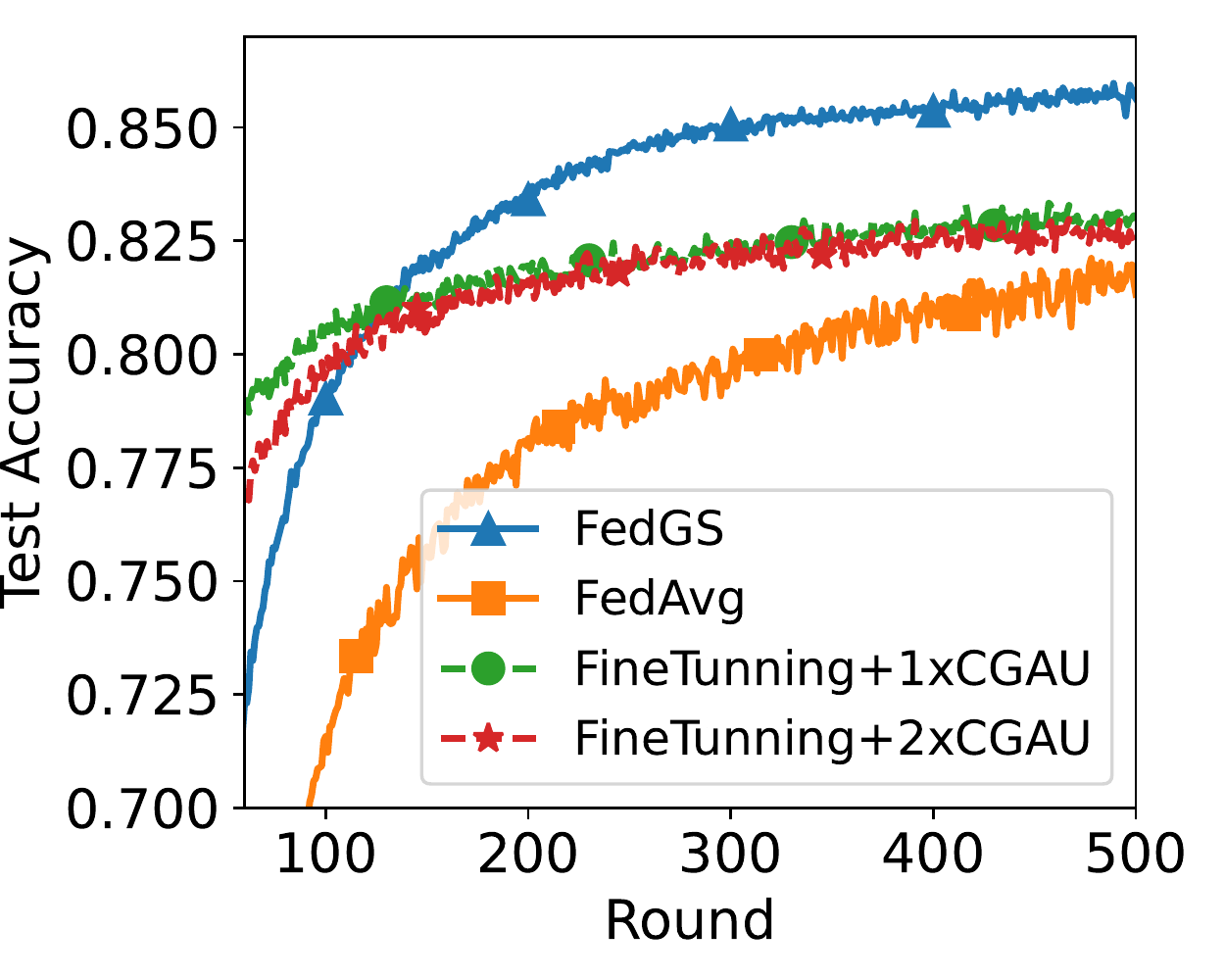}
\label{fig:acc-vs-cgau}}
\\
\subfloat[\NAME~vs FedProx]{\includegraphics[width=0.34\textwidth]{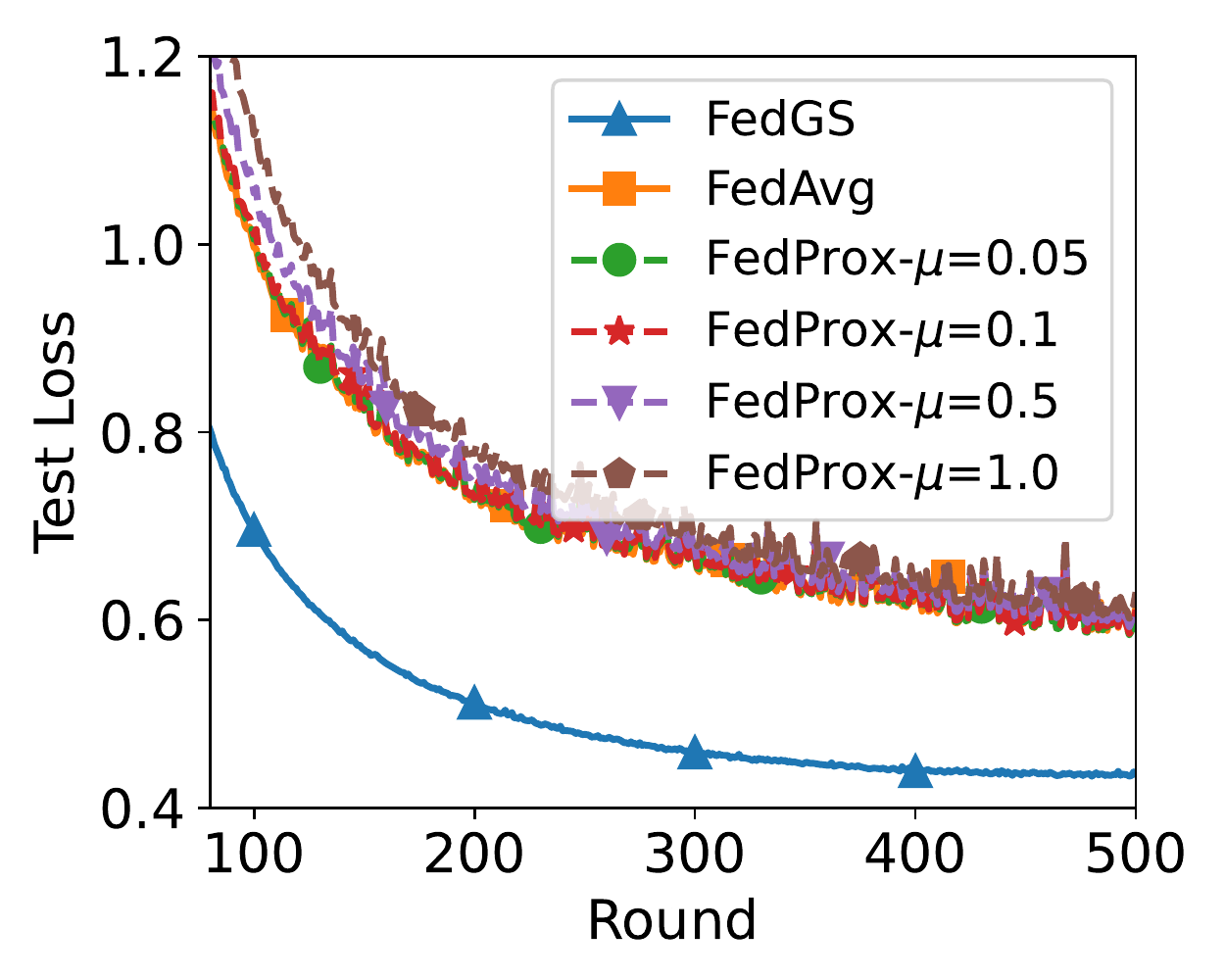}
\label{fig:loss-vs-fedprox}}
\subfloat[\NAME~vs IDA]{\includegraphics[width=0.34\textwidth]{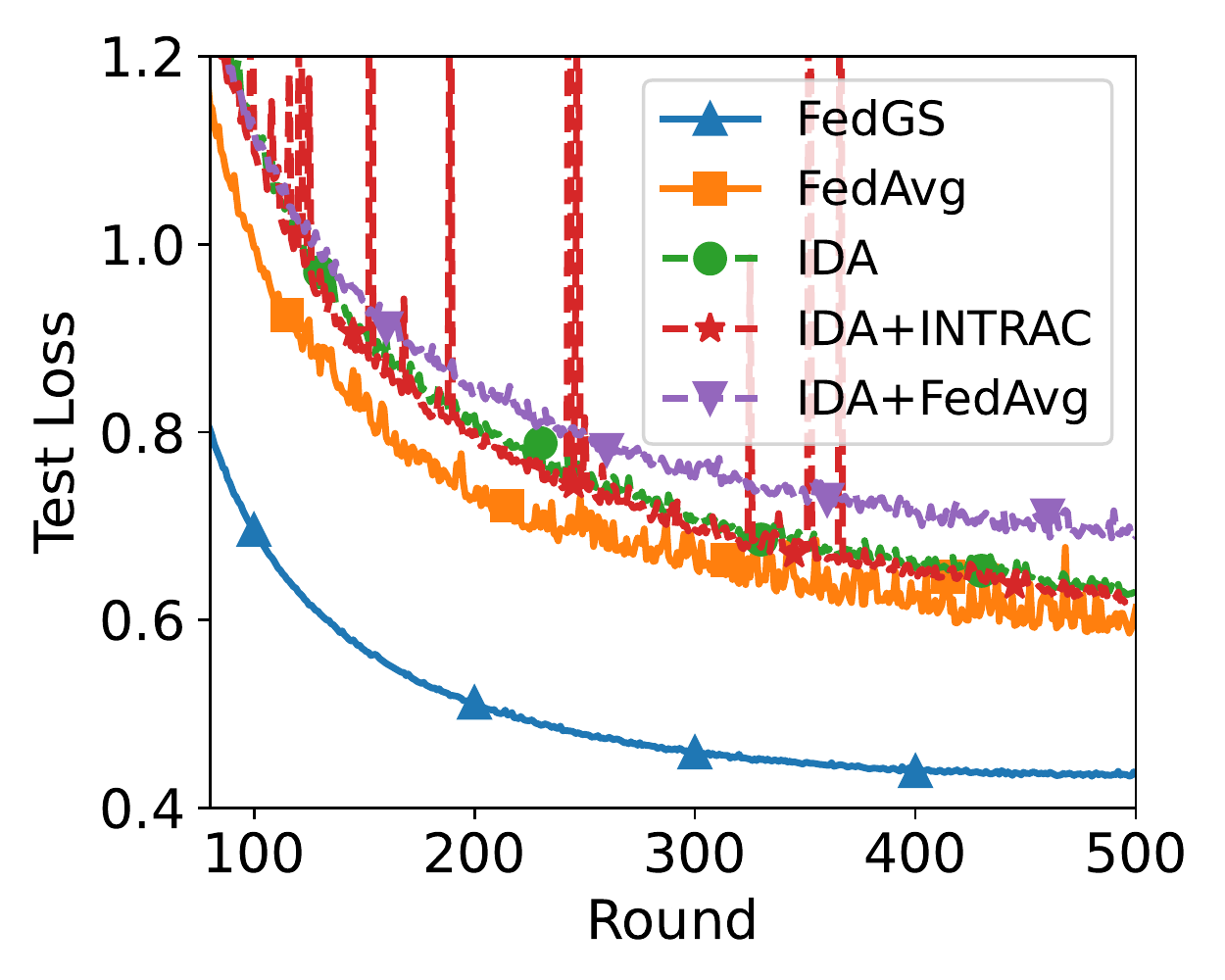}
\label{fig:loss-vs-ida}}
\subfloat[\NAME~vs CGAU]{\includegraphics[width=0.34\textwidth]{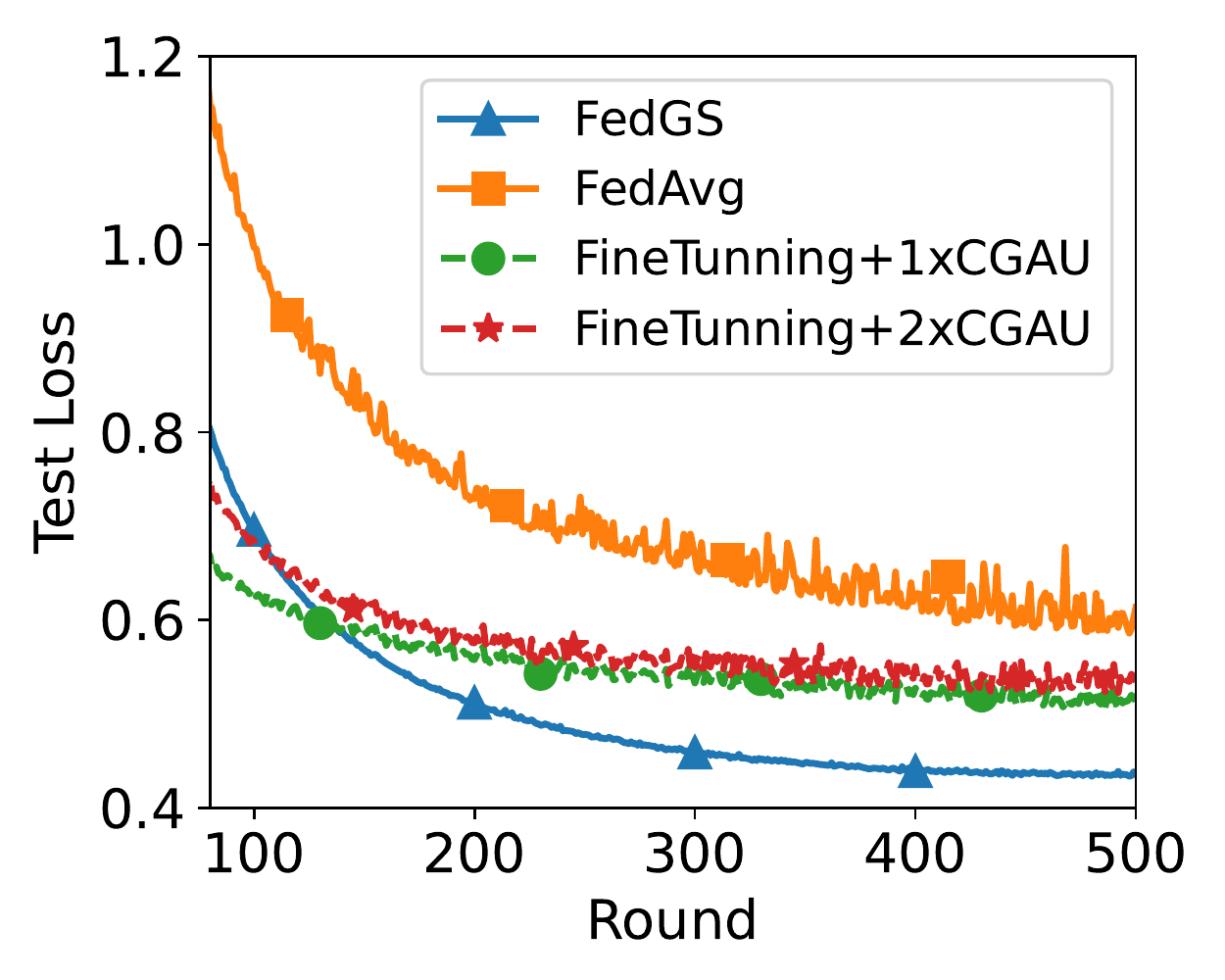}
\label{fig:loss-vs-cgau}}
\\
\subfloat[\NAME~vs FedMMD]{\includegraphics[width=0.34\textwidth]{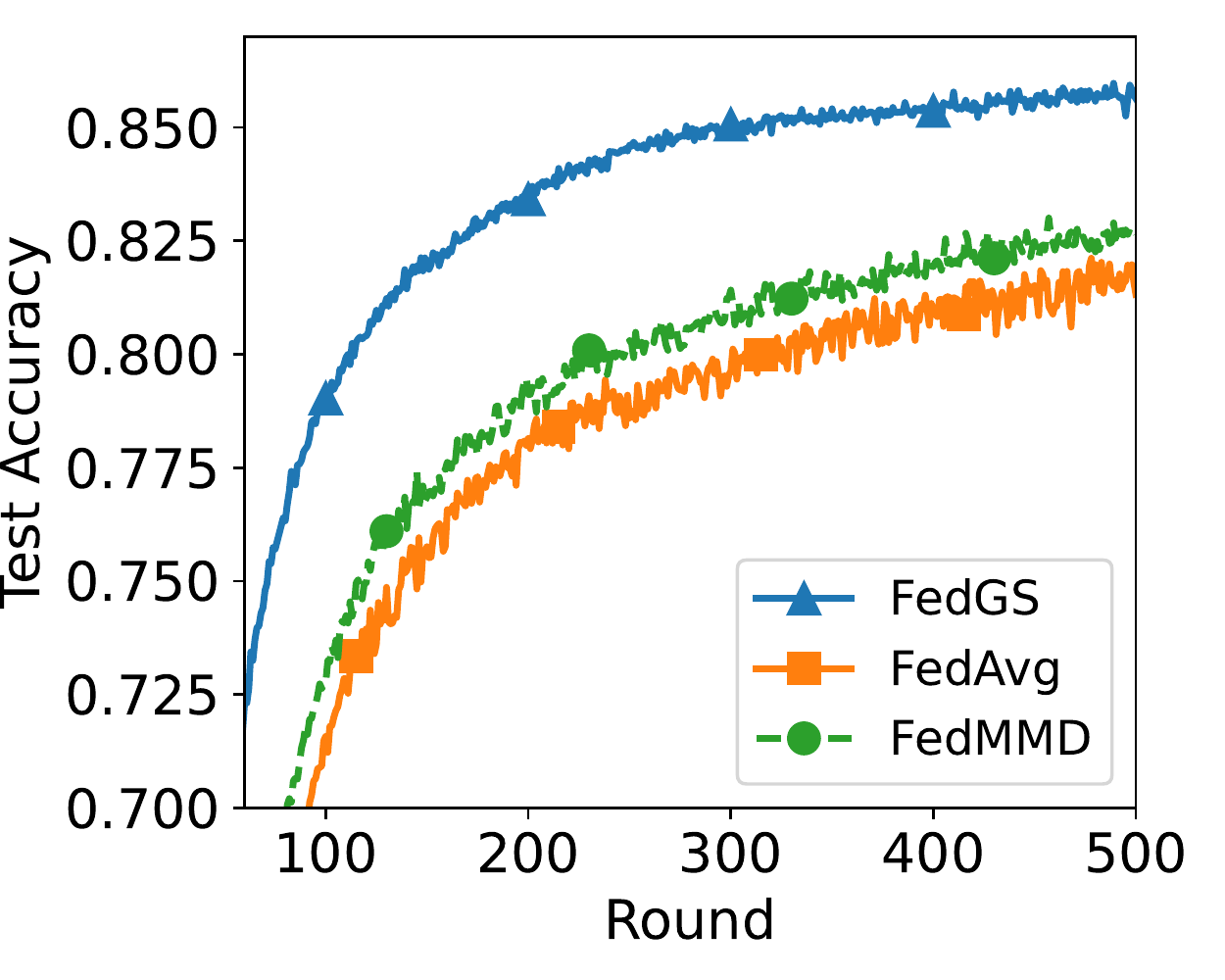}
\label{fig:acc-vs-fedmmd}}
\subfloat[\NAME~vs FedFusion]{\includegraphics[width=0.34\textwidth]{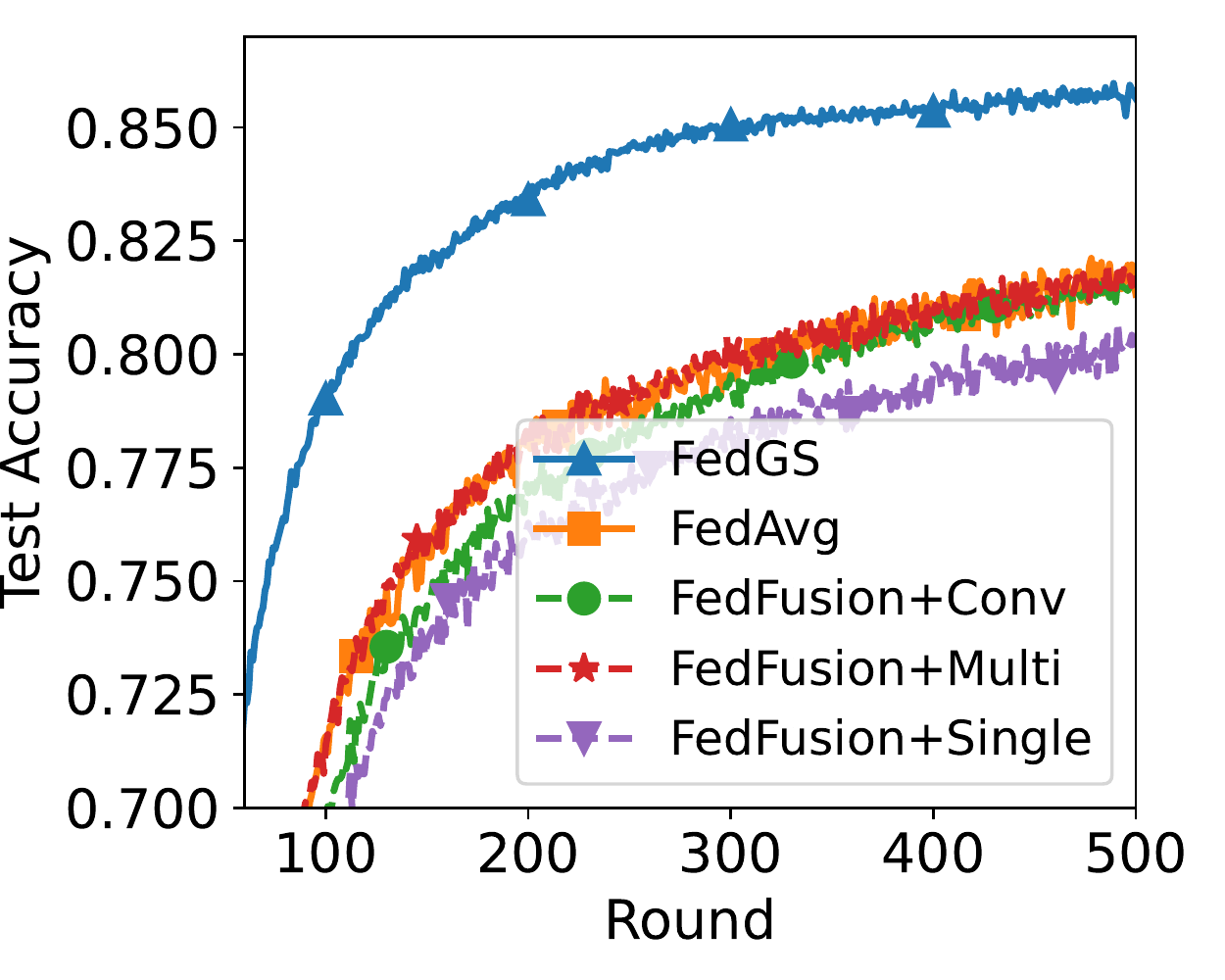}
\label{fig:acc-vs-fedfusion}}
\subfloat[\NAME~vs FedOpt]{\includegraphics[width=0.34\textwidth]{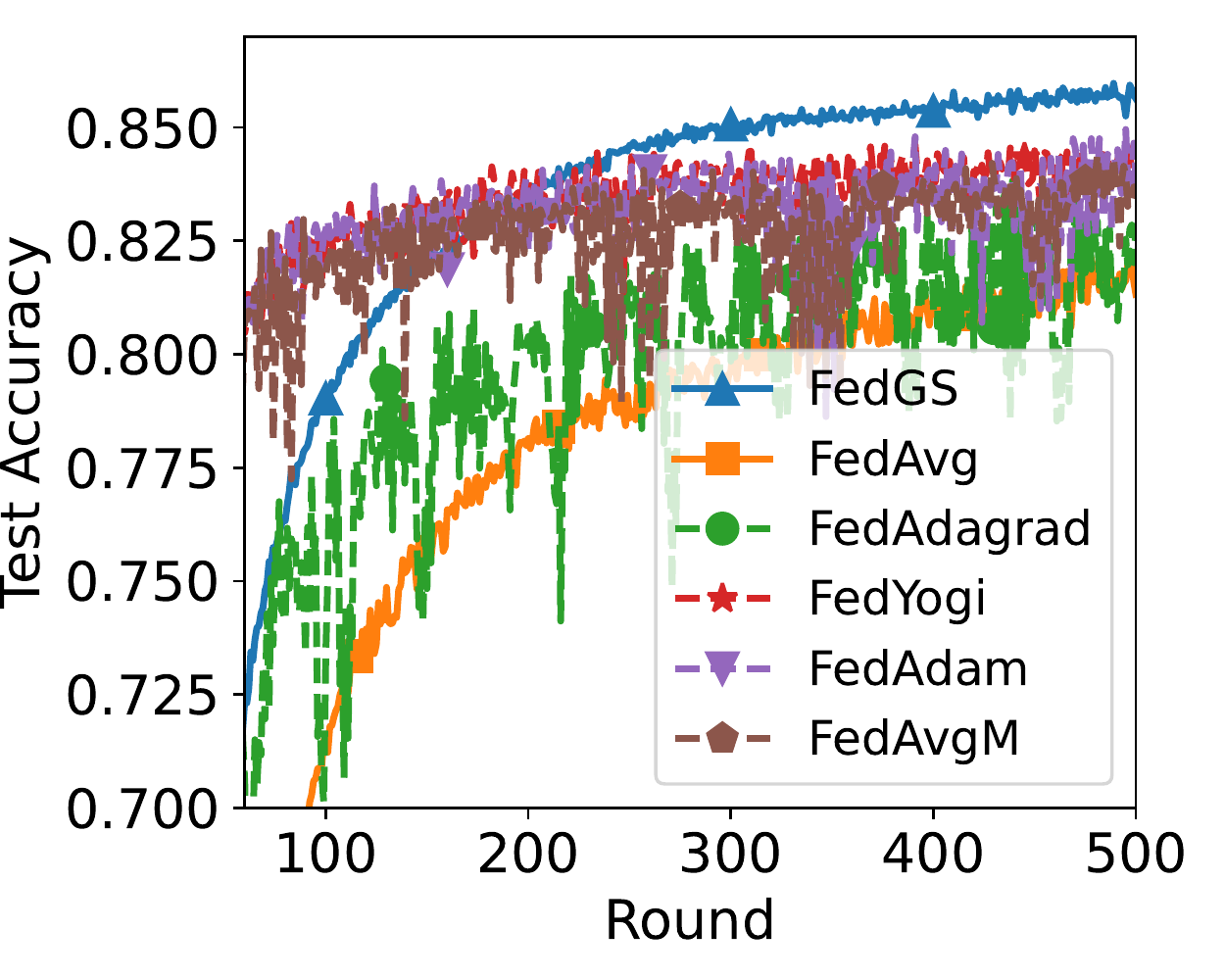}
\label{fig:acc-vs-fedopt}}
\\
\subfloat[\NAME~vs FedMMD]{\includegraphics[width=0.34\textwidth]{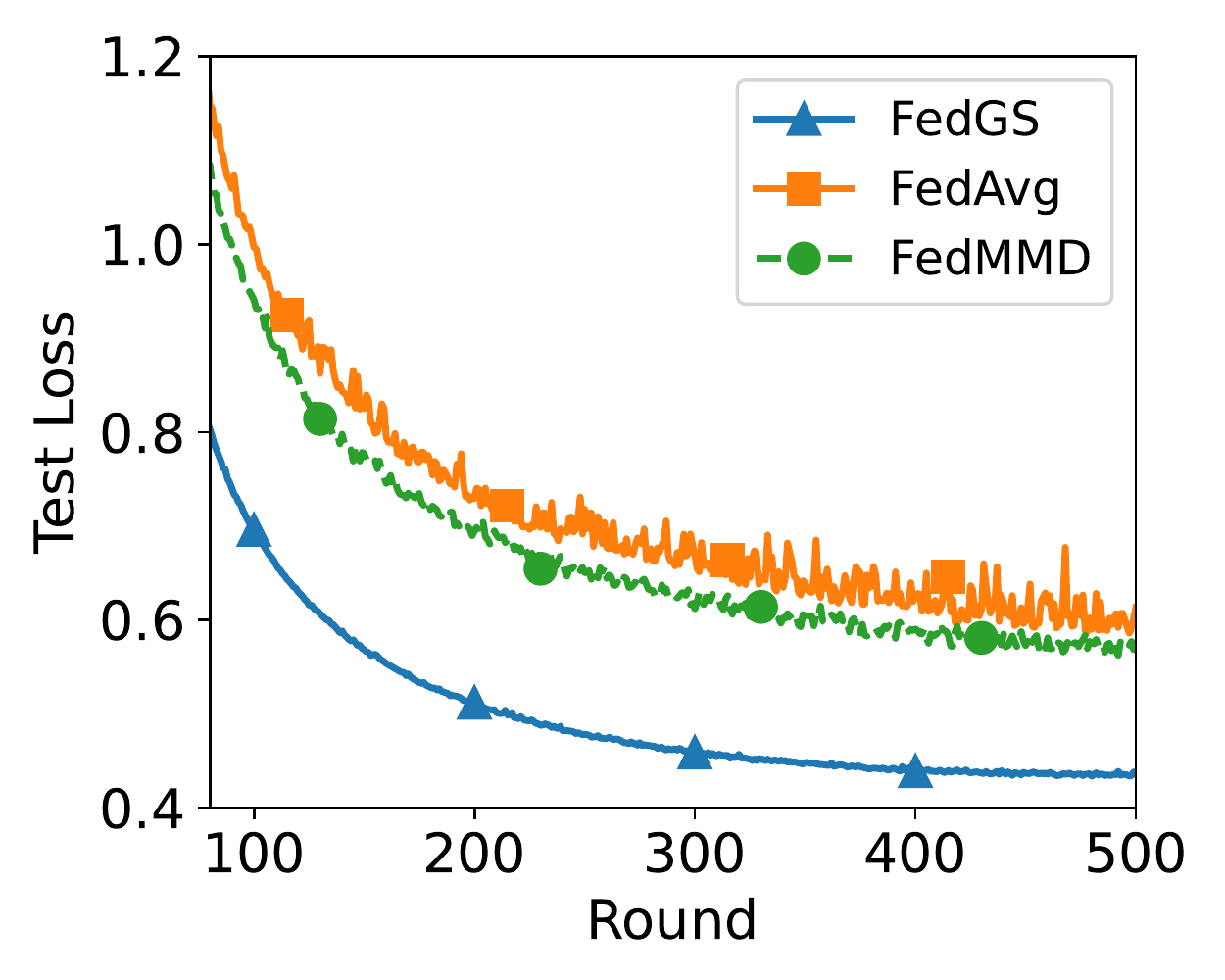}
\label{fig:loss-vs-fedmmd}}
\subfloat[\NAME~vs FedFusion]{\includegraphics[width=0.34\textwidth]{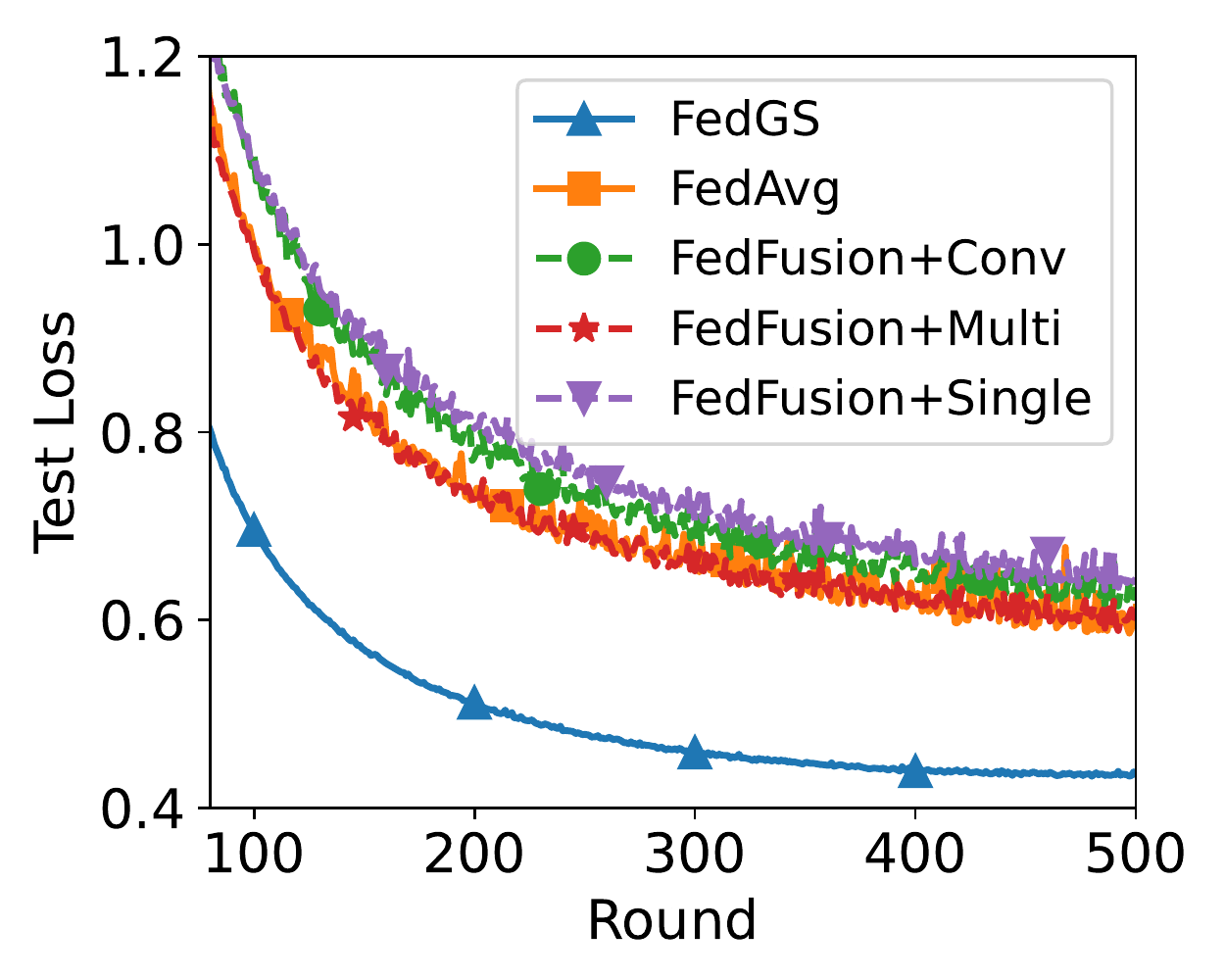}
\label{fig:loss-vs-fedfusion}}
\subfloat[\NAME~vs FedOpt]{\includegraphics[width=0.34\textwidth]{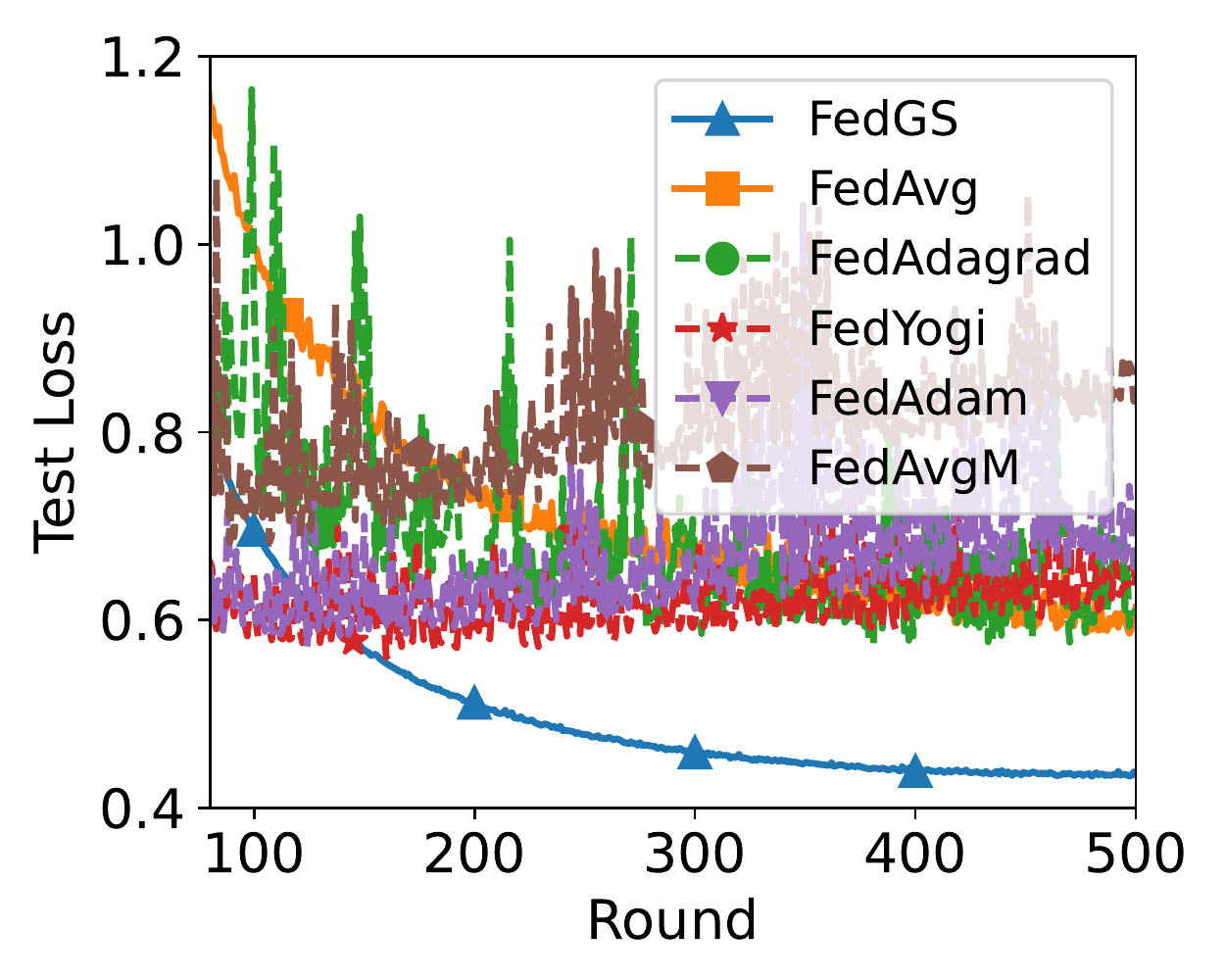}
\label{fig:loss-vs-fedopt}}
\caption{Comparison of \NAME~and FedAvg, FedProx, IDA, CGAU, FedMMD, FedFusion, FedAvgM, FedAdagrad, FedAdam, FedYogi.}
\label{fig:comparison-alg}
\end{figure*}

\textit{\NAME~vs FedProx.} FedProx adds a proximal term to local loss functions to penalize divergent local models. We tune the penalty constant $\mu=\{0.05,0.1,0.5,1.0\}$ to find the best result in Figs. \ref{fig:acc-vs-fedprox} and \ref{fig:loss-vs-fedprox}. However, FedProx performs poorly in our case, with the accuracy of 82.0\%, not even exceeding the baseline accuracy of 82.1\% of FedAvg. The reason may be that the proximal penalty term will slow convergence by forcing local models closer to the starting point\cite{li2018federated}. Instead, the proposed \NAME~improves the baseline accuracy by 3.9\% and achieves the accuracy of 86.0\%.

\textit{\NAME~vs IDA.} IDA weighs model parameters of devices based on their inverse distance to the averaged model parameter during aggregation. We combine IDA with inverse training accuracy coefficients (IDA+INTRAC) and normalized data size coefficients (IDA+FedAvg) as suggested by the authors. However, Figs. \ref{fig:acc-vs-ida} and \ref{fig:loss-vs-ida} shows that IDA-series approaches suffer an accuracy degradation ($80.5\%\sim81.0\%$). That is because devices with large parameter deviations are over-suppressed, causing the global model to lose data knowledge on these devices. Besides, IDA should cache model parameters uploaded by all devices until the average model parameter and inverse distance coefficients are calculated, which takes up huge memory space on the server.

\begin{figure}[t]
\centering
\subfloat[FedAvgM]{\includegraphics[width=0.24\textwidth]{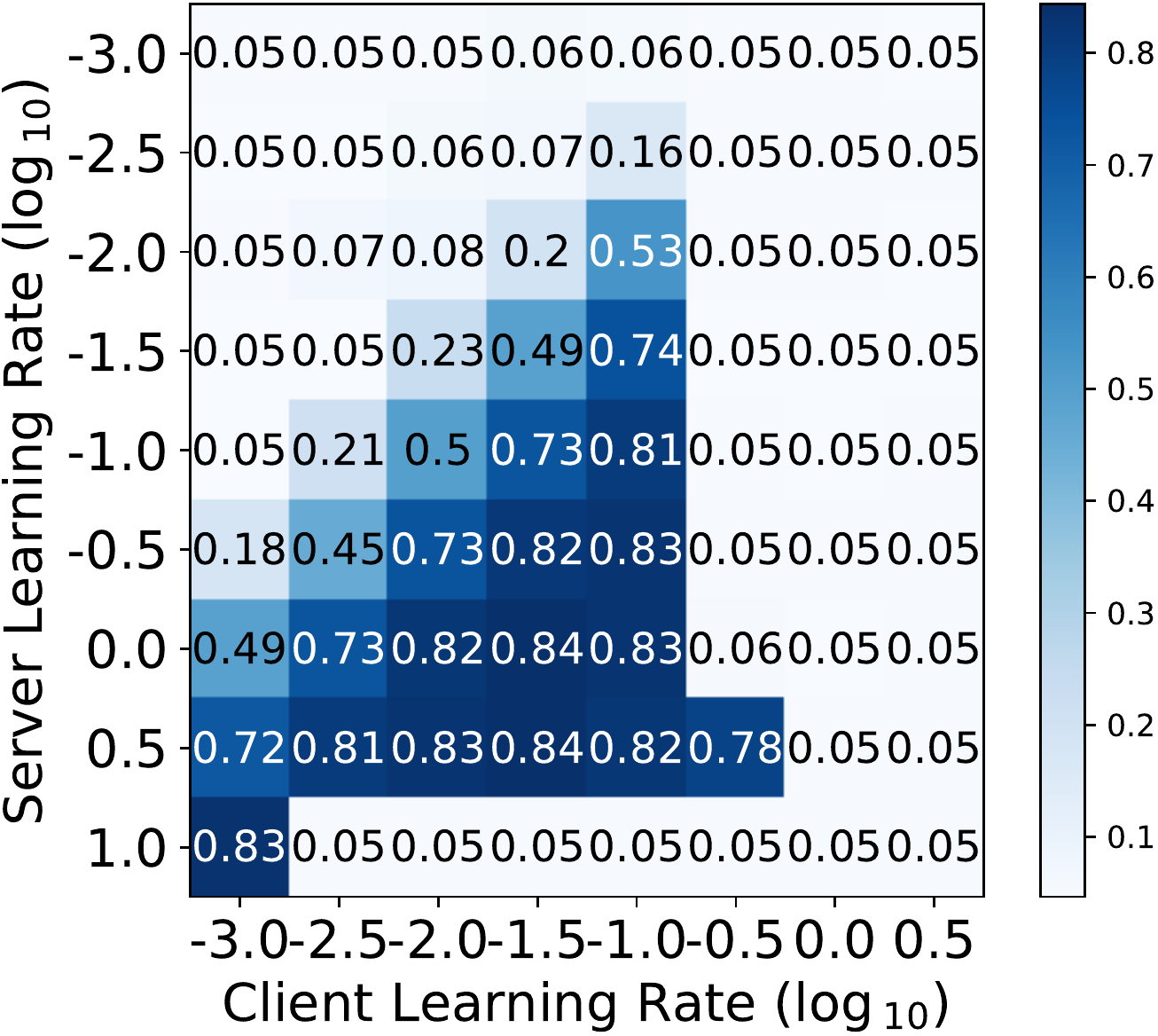}
\label{fig:heatmap-fedavgm}}
\subfloat[FedAdagrad]{\includegraphics[width=0.24\textwidth]{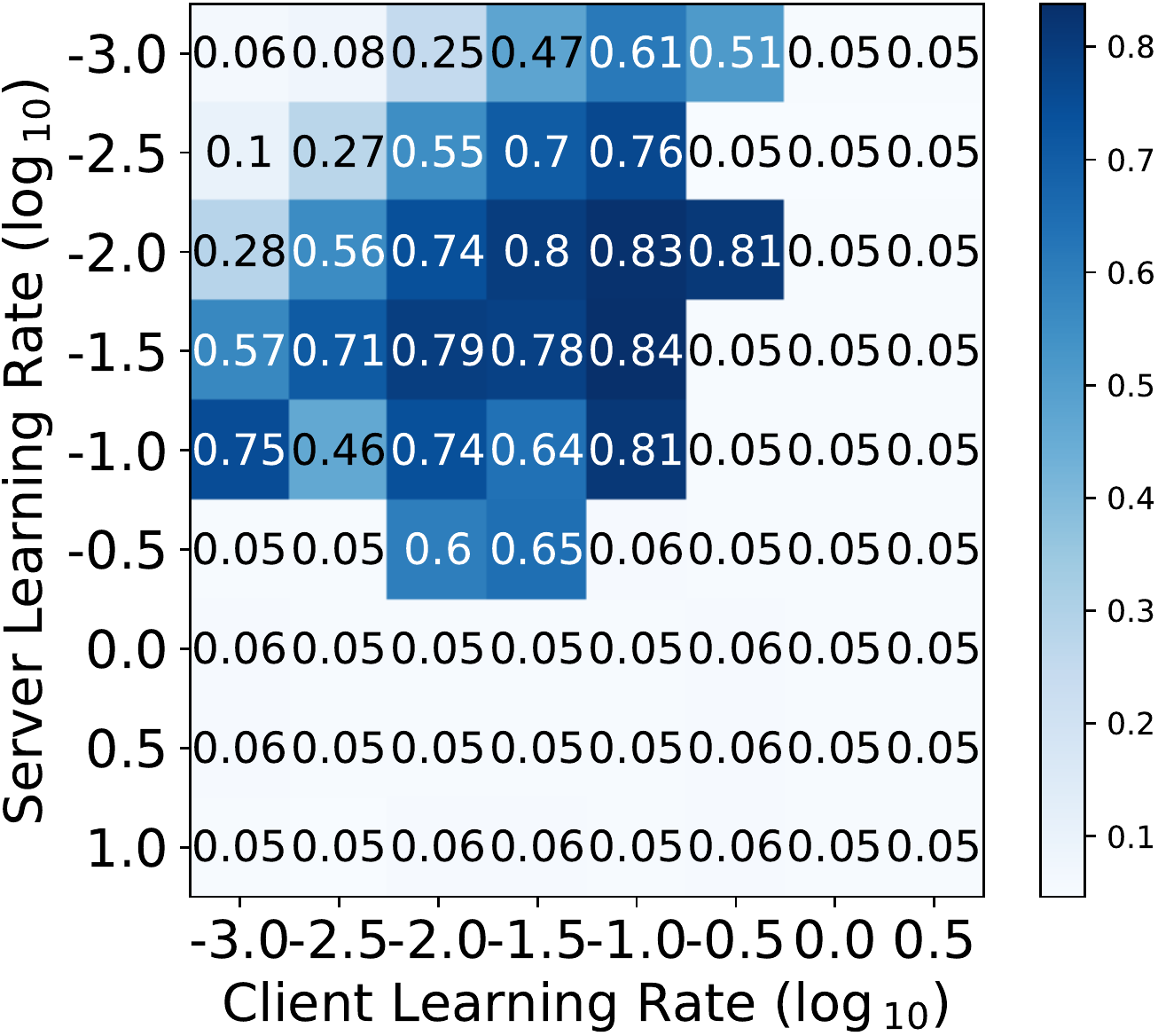}
\label{fig:heatmap-fedadagrad}} \\
\subfloat[FedAdam]{\includegraphics[width=0.24\textwidth]{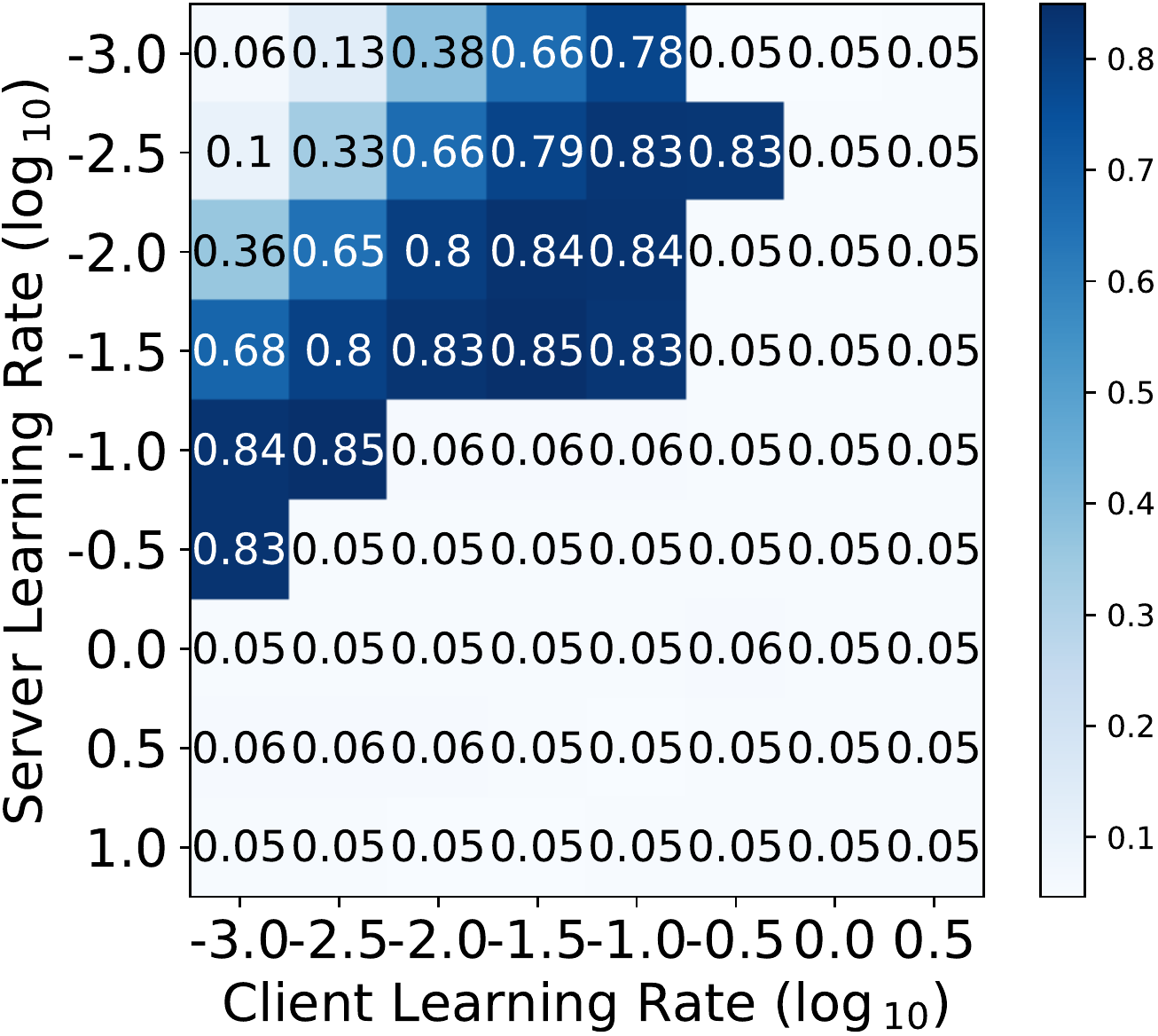}
\label{fig:heatmap-fedadam}}
\subfloat[FedYogi]{\includegraphics[width=0.24\textwidth]{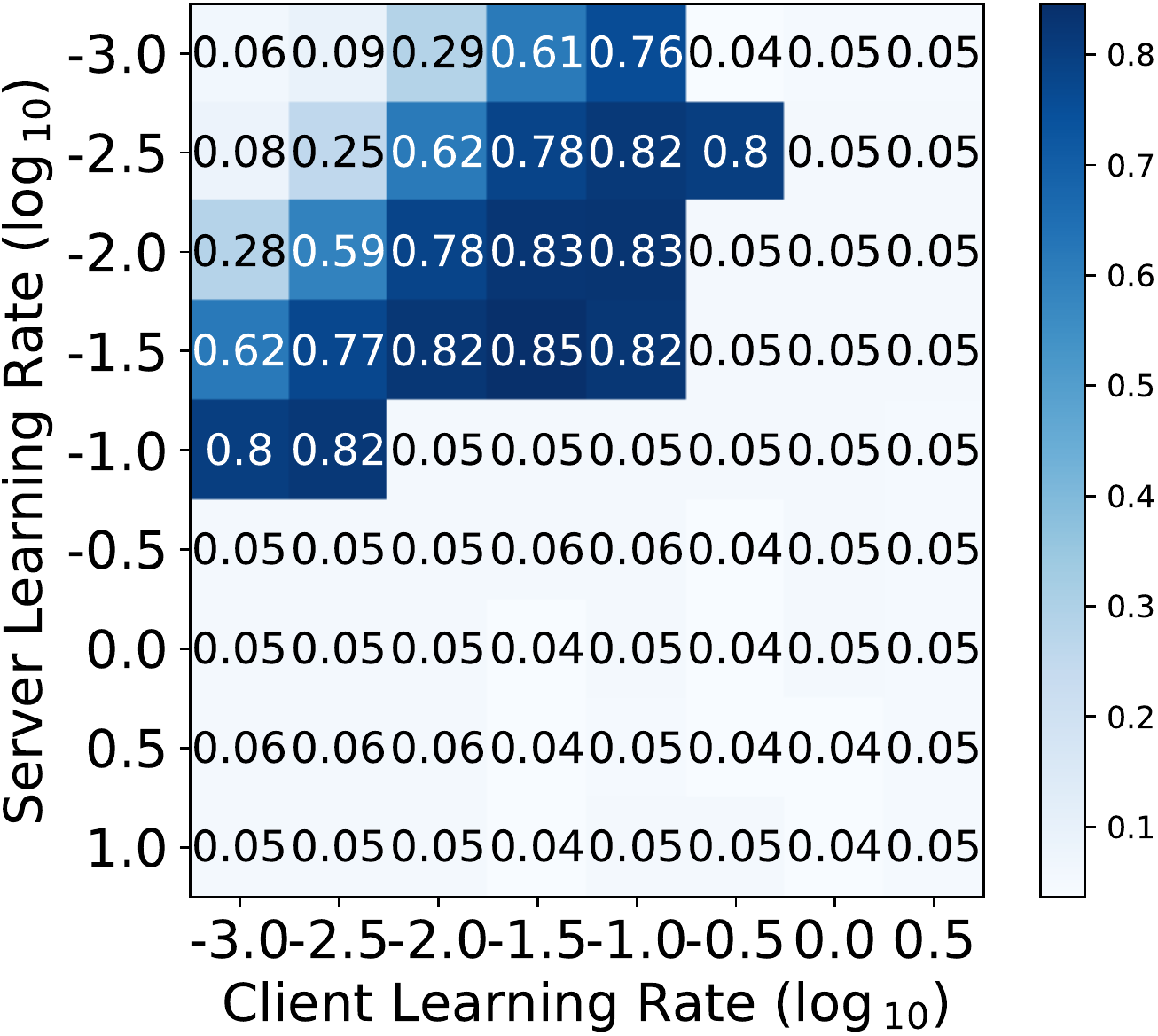}
\label{fig:heatmap-fedyogi}}
\caption{Accuracy Heatmap of (a) FedAvgM, (b) FedAdagrad, (c) FedAdam and (d) FedYogi.}
\label{fig:acc-heatmap}
\end{figure}

\textit{\NAME~vs CGAU.} CGAU uses gated activation units on top of a pre-trained model to enable client-specific expression of heterogeneous data. We train a 1-layer and a 2-layer CGAU classifier with 256 units, respectively (namely FineTunning+1$\times$CGAU and FineTunning+2$\times$CGAU). The dropout layer is not used as the authors did because we observed a 3.3\% drop in accuracy after using them. Figs. \ref{fig:acc-vs-cgau} and \ref{fig:loss-vs-cgau} show that FineTunning+1$\times$CGAU achieves a higher accuracy of 83.3\%, which improves the baseline accuracy by 1.2\% and benefits from the fast convergence speed of the pre-trained model. Despite these gains, the proposed \NAME~can still achieve 2.7\% higher accuracy, lower test loss, and faster convergence.

\textit{\NAME~vs FedMMD.} FedMMD uses transfer learning\cite{pan2009survey} to better merge the knowledge of the global model into the local model. As suggested by the authors, we use the MMD distance and the penalty coefficient $\gamma=0.1$. As shown in Figs. \ref{fig:acc-vs-fedmmd} and \ref{fig:loss-vs-fedmmd}, FedMMD improves the baseline accuracy by 0.9\% and achieves the accuracy of 83.0\%, but the proposed \NAME~further improves that by another 3\%.

\textit{\NAME~vs FedFusion.} FedFusion fuses the global and local features using operators such as $1\times1$ convolution (FedFusion+Conv), vector weighted average (FedFusion+Multi) and scalar weighted average (FedFusion+Single). However, the results in Figs. \ref{fig:acc-vs-fedfusion} and \ref{fig:loss-vs-fedfusion} show that FedFusion+Multi and FedFusion+Conv only achieve the accuracy similar to the baseline (82.0\% and 81.7\%, respectively), and FedFusion+Single even decreases the accuracy by 1.4\%. Instead, the proposed \NAME~is obviously better and faster.

\textit{\NAME~vs FedOpt.} FedOpt is a general paradigm for a series of adaptive federated optimizers, which dynamically adjusts learning rates of all gradients to accelerate convergence, including FedAvgM, FedAdagrad, FedAdam, and FedYogi. Preliminary experiments show that the convergence performance of these approaches has indeed significantly improved, but they are particularly sensitive to initial learning rates. As the authors did, we search for the best setting of the client-side and server-side initial learning rates in Fig. \ref{fig:acc-heatmap} and give the best results in Figs. \ref{fig:acc-vs-fedopt} and \ref{fig:loss-vs-fedopt}. Other hyperparameters follow the authors' setting, for example, $\beta=0.9$ for FedAvgM, $\beta_1=\beta_2=0$ for FedAdagrad, $\beta_1=0.9, \beta_2=0.99$ for FedAdam and FedYogi, and $\tau=0.001$. The results show that these approaches can improve the baseline accuracy by $1.7\%\sim2.9\%$ with fast convergence speed, especially FedAdam. However, the accuracy of the proposed \NAME~is still 1\% higher.

To sum up, FedAvgM, FedAdagrad, FedAdam, and FedYogi are generally better than other comparison approaches (some of which cannot even reach the accuracy of 82\%, marked with ``$\times$'' in Table \ref{table:comparison}). Instead, \NAME~achieves the state-of-the-art accuracy of 86.0\%, which is 3.9\% higher than the baseline and 3.5\% higher than the averaged accuracy. In addition, \NAME~can also reach the accuracy of 82\% in only 147 rounds, which is 3.3$\times$ faster than FedAvg and reduces training rounds by 59\% on average. These comprehensive experiments prove the effectiveness and efficiency of \NAME.
\section{Conclusion}
\new{FL in IIoT is emerging as a field of great value with increasing interest from both academia and industry, however, it still faces the challenge of non-i.i.d. data, which currently remains open. In this paper, we propose \NAME, which is a hierarchical cloud-edge-end FL framework for 5G empowered modern industries. To minimize the divergence in data distributions among factories, we propose a constrained gradient-based optimizer, namely \SAMPLERNAME, to select a subset of devices in each factory to construct homogeneous FL super nodes. \SAMPLERNAME~can  find a desirable selection strategy in a very short time, and can also be used for other practical cases such as game matching. Then, to eliminate the impact of residual non-i.i.d. data within the super nodes, we use a compound-step synchronization protocol to coordinate the training process. This protocol uses the data heterogeneity-insensitive one-step synchronization protocol within the super nodes to suppress the negative impact of data heterogeneity, then uses the multi-step synchronization protocol among the super nodes to reduce communication frequency.  The proposed approach takes into account the natural geographical clustering property of factory devices and can adapt to rapidly changing streaming data at runtime, without exposing confidential data in high-risk data manipulation. Theoretical analysis shows that \NAME~has both the convergence rate and optimality gap better than the benchmark FedAvg, and can be more time-efficient under a relaxed hyperparameter condition. Extensive experiments compared to ten advanced approaches demonstrate the state-of-the-art performance of \NAME~on non-i.i.d. data.}

\end{document}